\def\isarxiv{1} %%% for icml submission version, we comment this line
\definecolor{iccvblue}{rgb}{0.21,0.49,0.74}
\theoremstyle{plain}
\newtheorem{theorem}{Theorem}[section]
\newtheorem{lemma}[theorem]{Lemma}
\newtheorem{definition}[theorem]{Definition}
\newcommand{\wh}{\widehat}
\newcommand{\N}{\mathcal{N}}
\newcommand{\R}{\mathbb{R}}
\renewcommand{\d}{\mathrm{d}}
\newcommand{\X}{\mathsf{X}}
\newcommand{\Y}{\mathsf{Y}}
\newcommand{\Z}{\mathsf{Z}}
\newcommand{\F}{\mathsf{F}}
\newcommand{\V}{\mathsf{V}}
\DeclareMathOperator{\diag}{diag}
\newcommand*{\RN}[1]{\expandafter\@slowromancap\romannumeral #1@}
\title{HOFAR: High-Order Augmentation of Flow Autoregressive Transformers}
\author{First Author\\
Institution1\\
Institution1 address\\
{\tt\small firstauthor@i1.org}
% For a paper whose authors are all at the same institution,
% omit the following lines up until the closing ``}''.
% Additional authors and addresses can be added with ``\and'',
% just like the second author.
% To save space, use either the email address or home page, not both
\and
Second Author\\
Institution2\\
First line of institution2 address\\
{\tt\small secondauthor@i2.org}
}
\begin{document}

\ifdefined\isarxiv

\date{}
\title{HOFAR: High-Order Augmentation of Flow Autoregressive Transformers}
\author{
Yingyu Liang\thanks{\texttt{
yingyul@hku.hk}. The University of Hong Kong. \texttt{
yliang@cs.wisc.edu}. University of Wisconsin-Madison.} 
\and
Zhizhou Sha\thanks{\texttt{
shazz20@mails.tsinghua.edu.cn}. Tsinghua University.}
\and
Zhenmei Shi\thanks{\texttt{
zhmeishi@cs.wisc.edu}. University of Wisconsin-Madison.}
\and
Zhao Song\thanks{\texttt{ magic.linuxkde@gmail.com}. The Simons Institute for the Theory of Computing at UC Berkeley.}
\and
Mingda Wan\thanks{\texttt{
dylan.r.mathison@gmail.com}. Anhui University.}
}

\fi

\ifdefined\isarxiv
\begin{titlepage}
  \maketitle
  \begin{abstract}
    Flow Matching and Transformer architectures have demonstrated remarkable performance in image generation tasks, with recent work FlowAR [Ren et al., 2024] synergistically integrating both paradigms to advance synthesis fidelity. However, current FlowAR implementations remain constrained by first-order trajectory modeling during the generation process. This paper introduces a novel framework that systematically enhances flow autoregressive transformers through high-order supervision. We provide theoretical analysis and empirical evaluation showing that our High-Order FlowAR (HOFAR) demonstrates measurable improvements in generation quality compared to baseline models. The proposed approach advances the understanding of flow-based autoregressive modeling by introducing a systematic framework for analyzing trajectory dynamics through high-order expansion.

  \end{abstract}
  \thispagestyle{empty}
\end{titlepage}

{\hypersetup{linkcolor=black}
\tableofcontents
}
\newpage

\else

\maketitle
\begin{abstract}
    
\end{abstract}

\fi

\section{Introduction}

Recently, flow-matching \cite{lcb+22} and diffusion models \cite{hja20} have demonstrated remarkable capabilities in the field of image generation \cite{rbl+22,ekb+24}. Several works have explored extending these models to generate images with an additional dimension, such as incorporating a temporal dimension for video generation \cite{sph+22,lcw+23} or a 3D spatial dimension for 3D object generation \cite{xxmp24,m24}. Even 4D generation \cite{zcw+25,lyx+24} has become feasible using diffusion models.
Another prominent line of research focuses on auto-regressive models, where the Transformer framework has achieved groundbreaking success in natural language processing. Models such as GPT-4 \cite{gpt4}, Gemini 2 \cite{gemini2}, and DeepSeek \cite{gyz+25} have significantly impacted millions of users worldwide. 

Given the success of the auto-regressive generation paradigm and the Transformer framework, recent works have explored integrating auto-regressive generation into image generation. A representative example is the Visual Auto-Regressive (VAR) model \cite{tjy+25}, which introduces hierarchical image generation with different image patches. Other works, such as FlowAR \cite{ryh+24} and ARFlow \cite{hzy+24}, integrate flow-matching with auto-regressive generation.
However, these existing approaches primarily focus on modeling the direct transition path between the prior distribution and the target image distribution, paying less attention to high-order dynamics. High-order dynamics play a crucial role in capturing complex dependencies between different modalities, which is especially important for tasks like video generation that require long-term coherence. Moreover, high-order supervision enhances a model’s generalization ability by encouraging it to learn fundamental generative principles rather than relying on lower-order patterns.

Motivated by these insights, we propose High-Order FlowAR \textbf{(HOFAR)}, an approach that builds upon the strengths of auto-regressive models and flow-matching techniques while extending them to model higher-order interactions. By explicitly incorporating high-order dynamics, HOFAR improves realism, coherence, and generalization in generative tasks. We theoretically prove that HOFAR maintains computational efficiency compared to its base models while empirically demonstrating its superior performance.

In summary, our contributions are as follows:
\begin{itemize}
    \item We introduce HOFAR, a novel framework that integrates high-order dynamics into flow-matching-based auto-regressive generation, enhancing the model’s ability to capture complex dependencies.
    \item We provide a theoretical analysis showing that HOFAR maintains computational efficiency while benefiting from high-order modeling.
    \item We conduct empirical evaluations demonstrating that HOFAR achieves improved generation quality, coherence, and generalization compared to existing auto-regressive generative models.
\end{itemize}

 %%% Section 1. Introduction
\section{Related Works}

\subsection{Flow-based and Diffusion-based Generative Models}
Flow-based and diffusion-based generative models have demonstrated significant potential in image and video generation tasks \cite{hja20,hhs23,ltl+24,gxx+24,jwt+24, xzl+24, lzw+24}. Among these, Latent Diffusion Models (LDM) \cite{rbl+22} have emerged as a particularly powerful approach, especially in the domain of text-to-image synthesis. Recent advancements, such as Stable Diffusion V3 \cite{ekb+24}, have integrated flow-matching techniques as an alternative strategy to further improve generation quality and enhance the photorealism of synthesized images. Moreover, a growing body of research \cite{jsl+24,wsd+24,wcz+23,wxz+24} has highlighted the potential of combining the strengths of diffusion models and flow-matching models to achieve even greater generation fidelity. In this context, we acknowledge several influential works in flow-matching and diffusion-based generation \cite{hst+22,swyy23,dy24,hsz+24,wfq+24, ccl+25, lss+25_unreveling, ssz+24_dit, llss24_softmax, hwl+24,cll+25,kls+25,cgl+25_rich,kll+25,lsss24,lss+24_multi_layer, gkl+25}, which have greatly inspired our research. 

\subsection{High-Order Dynamic Supervision}
High-order dynamics are often overlooked in the research community, despite their critical role in modeling target distributions—such as image or video distributions—with greater accuracy and effectiveness. Current research primarily explores high-order dynamics within gradient-based methods. For example, solvers~\cite{dng+22,hlj+24} and regularization frameworks~\cite{kbjd20,fjno20} for neural ordinary differential equations (neural ODEs)~\cite{crbd18,gcb+18} frequently leverage higher-order derivatives to enhance performance~\cite{rck+24, cgl+25, lss+25}. Beyond machine learning, the study of higher-order temporal Taylor methods (TTMs) has been extensively applied to solving both stiff~\cite{cc94} and non-stiff~\cite{cc94,cc82} systems, demonstrating their broad utility in computational mathematics.

\paragraph{Roadmap.} 
This paper is organized as follows:
Section~\ref{sec:preliminary} introduces the fundamental notations used throughout the paper and provides formal definitions for each module in the proposed model.
In Section~\ref{sec:main_results}, we present the training and inference algorithms for our HOFAR model, along with an analysis of its computational efficiency.
In Section~\ref{sec:technical_overview}, we delve into the technical details and methodologies employed to prove our formal theorem.
In Section~\ref{sec:experiments}, we conduct an empirical evaluation of the HOFAR model, showcasing its effectiveness and robustness in image generation tasks.
Finally, in Section~\ref{sec:conclusion}, we summarize the key contributions of this paper and provide concluding remarks.

\section{Preliminary} \label{sec:preliminary}

In this section, we introduce the formal mathematical definitions for the FlowAR model and our High-Order FlowAR (HOFAR) model. These definitions provide the foundational framework for understanding the preprocessing, downsampling, upsampling, and transformer-based components of the proposed architecture.
In Section~\ref{sec:pre:notations}, we introduce the notations we used in this work. 
In Section~\ref{sec:pre:flowar_preprocessing_process}, we describe the preprocessing steps applied to input images before they are fed into the model. 
In Section~\ref{sec:pre:autoregresive_transformer_architecture}, we detail the autoregressive Transformer architecture, which generates the conditional embeddings utilized by the flow-matching components in the FlowAR model. 
Finally, in Section~\ref{sec:pre:flow_matching_architecture}, we provide a formal mathematical definition of flow modeling and present the implementation of the flow-matching architecture.

\subsection{Notations}\label{sec:pre:notations}
Given a matrix $X \in \R^{hw \times d}$, we denote its tensorized form as $\X \in \R^{h \times w \times d}$. Additionally, we define the set $[n]$ to represent $\{1,2,\cdots, n\}$ for any positive integer $n$. We define the set of natural numbers as $\mathbb{N}:= \{0,1,2,\dots\}$. Let $X \in \mathbb{R}^{m \times n}$ be a matrix, where $X_{i,j}$ refers to the element at the $i$-th row and $j$-th column. When $x_i$ belongs to $\{ 0,1 \}^*$, it signifies a binary number with arbitrary length. In a general setting, $x_i$ represents a length $p$ binary string, with each bit taking a value of either 1 or 0. Given a matrix $X \in \R^{n \times d}$, we define $\|X\|_\infty  $ as the maximum norm of $X$. Specifically, $\|X\|_\infty = \max_{i,j} |X_{i,j}|$. 

\subsection{FlowAR Preprocessing Process} \label{sec:pre:flowar_preprocessing_process}

We begin by introducing the preprocessing procedure of the FlowAR model. The image is first passed through a Variational Autoencoder (VAE) to obtain a latent image embedding before being processed by the main body of the FlowAR model. 

Let $\X \in \R^{h \times w \times c}$ denote the image embedding generated by the VAE, where $h$, $w$, and $c$ represent the height, width, and number of channels, respectively. The next step involves downsampling the image embedding $\X$ to multiple scales. To formalize this process, we first define the linear downsampling function.

\begin{definition}[Linear Downsampling Function]\label{def:linear_down_sample_function}
If the following conditions hold:
\begin{itemize}
    \item Let $\X \in \R^{h \times w \times c}$ denote the input tensor, where $h,w,c$ represent height, width, and the number of channels, respectively.
    \item Let the positive integer $r \geq 1$ denote the scaling factor.
\end{itemize}
The linear downsampling function $\phi_{\mathrm{down}}(\X,r)$ computes an output tensor $\Y \in \R^{(h/r) \times (w/r) \times c}$. 

To be more specific, let $\Phi_{\mathrm{down}} \in \R^{(h/r \cdot w/r) \times hw}$ denote a linear transformation matrix. 
The downsampling transformation consists of three steps:
\begin{itemize}
    \item Reshape $\X$ into the matrix $X \in \R^{hw \times c}$ by flattening its spatial dimensions.  
    \item Apply the linear transformation matrix $\Phi_{\mathrm{down}}$ on $\X$ as 
    \begin{align*}
         Y = \Phi_{\mathrm{down}}X \in \R^{(h/r \cdot w/r) \times c},
    \end{align*}
    \item Reshaped back to $\Y \in \R^{(h/r) \times(w/r) \times c}$.
\end{itemize}
\end{definition}

Next, we define the multi-scale downsampling tokenizer, which leverages the linear downsampling function to generate a sequence of token maps at multiple scales.

\begin{definition}[Multi-Scale Downsampling Tokenizer]\label{def:downsample_tokenizer}
If the following conditions hold:
\begin{itemize}
 \item Let $\X\in \R^{h \times w \times c}$ denote the image embedding generated by VAE.
 \item Let $K \in \mathbb{N}$ denote the number of scales.
 \item Let the positive integer $a \geq 1$ denote the base scaling factor. 
 \item For $i \in [K]$, we define scale-specific factors $r_i := a^{K-i}$ and use the linear downsampling function $\phi_{\mathrm{down}}(\X,r_i)$ from Definition~\ref{def:linear_down_sample_function}.
\end{itemize}
We define the multi-scale downsampling tokenizer as $\mathsf{TN}(\X) := \{\Y^{1}, \dots,\Y^{K}\}$, which outputs a sequence of token maps $\{\Y^2, \Y^2,\dots, \Y^K\}$, where the $i$-th token map is generated by
 \begin{align*}
     \Y^i := \phi_{\mathrm{down},i}(\X,r_i) \in \R^{(h / r_i) \times (w/r_i) \times c},
 \end{align*}

\end{definition}

During inference, we need to upsample the embeddings after each processing step. To formalize this operation, we define the bicubic upsampling function as follows.

\begin{definition}[Upsampling Function]\label{def:bicubic_up_sample_function}
If the following conditions hold:
\begin{itemize}
    \item Let $\X \in \R^{h \times w \times c}$ denote the input tensor, where $h,w,c$ represent height, width, and the number of channels, respectively.
    \item Let the A positive integer $r \geq 1$ denote the scaling factor.
    \item Let $W:\R \to [0,1]$ denote the bicubic kernel. 
\end{itemize}
We define the bicubic upsampling function as $\phi_{\mathrm{up}}(\X,r)$, which computes $\Y \in \R^{rh \times rw \times c}$. For every output position $i \in [rh], j \in [rw], l \in [c]$:
\begin{align*}
    \Y_{i,j,l} =  \sum_{s=-1}^2 \sum_{t=-1}^2 W(s) \cdot W(t) \cdot \X_{\lfloor \frac{i}{r}\rfloor+s, \lfloor \frac{j}{r}\rfloor+t,l}
\end{align*}

\end{definition}

\subsection{Autoregressive Transformer Architecture} \label{sec:pre:autoregresive_transformer_architecture}

The downsampled embeddings are then fed into the transformer architecture to generate the condition tensor for the flow matching model. The autoregressive transformer is a key component of the FlowAR model. Below, we define its attention layer, feedforward layer, and the overall autoregressive transformer.

\begin{definition}[Attention Layer]\label{def:attn_layer}
If the following conditions hold:
\begin{itemize}
    \item Let $\X \in \R^{h \times w \times c}$ denote the input tensor, where $h,w,c$ represent height, width, and the number of channels, respectively.
    \item Let $W_Q,W_K,W_V \in \R^{c \times c}$ denote the weight matrices, which will be used in query, key, and value projection, respectively.
\end{itemize}
The attention layer $\mathsf{Attn}(\X)$ is defined by computing the output tensor $\Y \in \R^{h \times w \times c}$ in the following three steps:
\begin{itemize}
    \item Reshape $\X$ into a matrix $X \in \R^{hw \times c}$ with spatial dimensions collapsed.
    \item Attention matrix computation. For $i,j \in [hw]$, compute pairwise scores:
    \begin{align*}
        A_{i,j} := & ~\exp(  X_{i,*}   W_Q   W_K^\top   X_{j,*}^\top), \text{~~for~} i, j \in [hw].
    \end{align*}
    \item Normalization. Compute diagnal matrix $D:=\diag(A {\bf 1}_n) \in \R^{hw \times hw}$, where ${\bf 1}_n$ is the all-ones vector. And compute:
    \begin{align*}
         Y := D^{-1}AXW_V \in \R^{hw \times c}.
    \end{align*}
    \item Reshape $Y$ to $\Y \in \R^{h \times w \times c}$.
\end{itemize}
\end{definition}

The feedforward layer is another critical component of the transformer architecture. We define it as follows.

\begin{definition}[Feed Forward Layer]\label{def:ffn}
If the following conditions hold:
\begin{itemize}
    \item Let $\X \in \R^{h \times w \times c}$ denote the input tensor, where $h,w,c$ represent height, width, and the number of channels, respectively.
    \item Let $W_1, W_2 \in \R^{c \times d}$ denote the weight matrices and $b_1, b_2 \in \R^{1 \times d}$ denote the bias vectors.
    \item Let $\sigma:\R \to \R$ denote the $\mathsf{ReLU}$ activation function which is applied element-wise.
\end{itemize}
We defined the feedforward operation as $\Y := \mathsf{FFN}(\X)$.

To be more specific, it computes an output tensor $\Y \in \R^{h \times w \times d}$ in the following steps:
\begin{itemize}
    \item Reshape $\X$ into a matrix $X \in \R^{hw \times c}$ with spatial dimensions collapsed.
    \item For each $j \in [hw]$, compute 
    \begin{align*}
        Y_{j,*}=  \underbrace{X_{j,*}}_{1 \times c} +  \sigma (\underbrace{X_{j,*}}_{1\times c} \cdot \underbrace{W_1}_{c \times c} + \underbrace{b_1}_{1\times c}) \cdot \underbrace{W_2}_{c \times c} + \underbrace{b_2}_{1 \times c} \in \R^{1 \times c}
    \end{align*}
    where $\sigma$ acts element-wise on intermediate results. Then reshape $Y \in \R^{hw \times c}$ into $\Y \in \R^{h \times w \times c}$.
\end{itemize}

\end{definition}

Using the attention and feedforward layers, we now define the autoregressive transformer.

\begin{definition}[Autoregressive Transformer]\label{def:ar_transformer}
If the following conditions hold:
\begin{itemize}
    \item Let $\X \in \R^{h \times w \times c}$ denote the input tensor, where $h,w,c$ represent height, width, and the number of channels, respectively.
    \item Let $K \in \mathbb{N}$ denote the scale number, which is the number of total scales in FlowAR.
    \item For $i \in [K]$, let $\Y_i \in \R^{(h/r_i) \times (w/r_i) \times c}$ denote the token maps generated by the Multi-Scale downsampling tokenizer defined in Definition~\ref{def:downsample_tokenizer} where $r_i = a^{K-i}$ with base $a \in \mathbb{N}^+$.
    \item For $i \in [K]$, let $\phi_{\mathrm{up},i}(\cdot,a): \R^{(h/r_i) \times (w/r_i) \times c}\to \R^{(h/r_{i+1}) \times (w/r_{i+1}) \times c}$ denote the upsampling functions as defined in Definition~\ref{def:bicubic_up_sample_function}.
    \item For $i \in [K]$, let $\mathsf{Attn}_i(\cdot):\R^{(\sum_{j=1}^i h/r_j \cdot w/r_{j})\times c} \to \R^{(\sum_{j=1}^i h/r_j \cdot w/r_{j})\times c}$ denote the attention layer which acts on flattened sequences of dimension defined in Definition~\ref{def:attn_layer}.
    \item For $i \in [K]$, let $\mathsf{FFN}_i(\cdot): \R^{(\sum_{j=1}^i h/r_j \cdot w/r_{j})\times c} \to \R^{(\sum_{j=1}^i h/r_j \cdot w/r_{j})\times c}$ denote the feed forward layer which acts on flattened sequences of dimension defined in Definition~\ref{def:ffn}.
    \item Let $\Z_{\mathrm{init}} \in \R^{(h/r_1) \times (w/r_1) \times c}$ denote the initial condition embedding which encodes class information.
\end{itemize}
Then, the autoregressive processing is:
\begin{itemize}
    \item {\bf Initialization: } Let $\Z_1:=\Z_{\mathrm{init}}$.
    \item {\bf Iterative sequence construction:} For $i \geq 2$.
    \begin{align*}
        \Z_i := \mathsf{Concat}(\mathsf{Z}_{\mathrm{init}}, \phi_{\mathrm{up}, 1}(\Y^1, a), \ldots, \phi_{\mathrm{up}, i-1}(\Y^{i-1}, a)) 
    \end{align*}
    where $\mathsf{Concat}$ reshapes tokens into a unified spatial grid.
    \item {\bf Transformer block:} For $i \in [K]$,
    \begin{align*}
        \mathsf{TF}_i(\Z_i) := \mathsf{FFN_i}(\mathsf{Attn}_i(\Z_i))
        \in \R^{(\sum_{j=1}^i h/r_j \cdot w/r_{j})\times c}
    \end{align*}
    \item {\bf Output decomposition:} Extract the last scale's dimension   from the reshaped $\mathsf{TF}_i(\Z_i)$ to generate $\wh{\Y}_i \in \R^{(h/r_i) \times (w/r_i) \times c}$.
\end{itemize}
\end{definition}

\subsection{Flow Matching Architecture} \label{sec:pre:flow_matching_architecture}

We begin by outlining the concept of velocity flow in the flow-matching architecture. This section introduces the foundational definitions and components necessary to understand the flow-matching model.

\begin{definition}[Flow]\label{def:flow}
If the following conditions hold:
\begin{itemize}
    \item Let $\X \in \R^{h \times w \times c}$ denote the input tensor, where $h,w,c$ represent height, width, and the number of channels, respectively.
    \item Let $K \in \mathbb{N}$ denote the scales number.
    \item For $i \in [K]$, let $\F_i^0 \in \R^{(h / r_i) \times (w/r_i) \times c}$ denote the noise tensor with every entry sampled from $\mathcal{N}(0,1)$.
    \item For $i \in [K]$, let $\wh{\Y}_i \in \R^{(h / r_i) \times (w/r_i) \times c}$ denote the token maps generated by autoregressive transformer as defined in Definition~\ref{def:ar_transformer}.
\end{itemize}
Then, we define the flow model supports the following two operations:
\begin{itemize}
    \item {\bf Interpolation:} For timestep $t \in [0,1]$ and scale $i$,
    \begin{align*}
        \F_i^t := t \wh{\Y}_i + (1-t) \F_i^0
    \end{align*}
    which describes a linear trajectory between the noise $\F_0^i$ and target tokens  $\wh{\Y}_i$.
    \item {\bf Velocity Field:} The time derivative of the flow at scale $i$ is given by
    \begin{align*}
        \V^t_i := \frac{\d \F^t_{i}}{\d t} = \wh{\Y_i} -\F^0_i.
    \end{align*}
    This velocity field is constant across $t$ due to the linear nature of the interpolation.
\end{itemize}
\end{definition}

Before introducing the implementation of the flow-matching model, we first define two essential components: the Multi-Layer Perceptron (MLP) layer and the Layer Normalization (LN) layer. These components are critical for constructing the flow-matching architecture.

\begin{definition}[MLP Layer]\label{def:mlp}
If the following conditions hold:
\begin{itemize}
    \item Let $\X \in \R^{h \times w \times c}$ denote the input tensor, where $h,w,c$ represent height, width, and the number of channels, respectively.
    \item Let $W \in \R^{c \times d}$ denote the weight matrix and $b \in \R^{1 \times d}$ denote the bias vector.
\end{itemize}
We define the MLP layer as $\Y := \mathsf{MLP}(\X,c,d)$, which outputs tensor $\Y \in \R^{h \times w \times d}$ by using the following operations:
\begin{itemize}
    \item Reshape $\X$ into a matrix $X \in \R^{hw \times c}$ with spatial dimensions collapsed.
    \item For all $j \in [hw]$, we apply affine transformation on each row as follows
    \begin{align*}
        Y_{j,*} = \underbrace{X_{j,*}}_{1\times c} \cdot \underbrace{W}_{c \times d} + \underbrace{b}_{1 \times d}
    \end{align*}
    \item Reshape $Y \in \R^{hw \times d}$ into $\Y \in \R^{h \times w \times d}$.
\end{itemize}
\end{definition}

Next, we define the Layer Normalization layer, which is a key component for stabilizing and normalizing the inputs to the flow-matching architecture.
\begin{definition}[Layer Normalization Layer]\label{def:ln}
If the following conditions hold:
\begin{itemize}
    \item Let $\X \in \R^{h \times w \times c}$ denote the input tensor, where $h,w,c$ represent height, width, and the number of channels, respectively.
\end{itemize}
We define the layer normalization as $\Y := \mathsf{LN}(\X)$, which computes $\Y$ through the following steps
\begin{itemize}
    \item Reshape $\X$ into a matrix $X \in \R^{hw \times c}$ with spatial dimensions collapsed.
    \item For each $j \in [hw]$, we apply normalization on each row of the matrix,
    \begin{align*}
        Y_{j,*} = (X_{j,*}-\mu_j) \sigma_j^{-1}
    \end{align*}
    where
    \begin{align*}
        \mu_j := \sum_{k=1}^c X_{j,k}/c, ~~ \sigma_{j} = (\sum_{k=1}^c(X_{j,k}-\mu_j)^2/c)^{1/2}
    \end{align*}
    \item Reshape $Y \in \R^{hw \times c}$ into $\Y \in \R^{h \times w \times c}$.
\end{itemize}
\end{definition}

With the MLP and Layer Normalization layers defined, we now introduce the flow-matching layer, which is a core component of the FlowAR model.

\begin{definition}[Flow Matching Architecture]\label{def:flow_matching_architecture}
If the following conditions hold:
\begin{itemize}
    \item Let $\X \in \R^{h \times w \times c}$ denote the input tensor, where $h,w,c$ represent height, width, and the number of channels, respectively.
    \item Let $K \in \mathbb{N}$ denote the number of total scales in FlowAR.
    \item For $i \in [K]$, let $\wh{\Y}_i \in \R^{(h / r_i) \times (w/r_i) \times c}$ denote the token maps generated by autoregressive transformer defined in Definition~\ref{def:ar_transformer}.
    \item For $i \in [K]$, let $\F_i^t \in \R^{(h / r_i) \times (w/r_i) \times c}$ denote interpolated input defined in Definition~\ref{def:flow}.
    \item For $i \in [K]$, let $t_i \in [0,1]$ denote timestep.
    \item For $i \in [K]$, let $\mathsf{Attn}_i(\cdot):\R^{h/r_i \times w/r_i \times c} \to \R^{h/r_i \times w/r_i \times c}$ denote the attention layer as defined in Definition~\ref{def:attn_layer}.
    \item For $i \in [K]$, let $\mathsf{MLP}_i(\cdot,c,d):\R^{h/r_i \times w/r_i \times c} \to \R^{h/r_i \times w/r_i \times c}$ denote the MLP layer as defined in Definition~\ref{def:mlp}.
    \item For $i \in [K]$, let $\mathsf{LN}_i(\cdot):\R^{h/r_i \times w/r_i \times c} \to \R^{h/r_i \times w/r_i \times c}$ denote the layer norm layer as defined in Definition~\ref{def:ln}.
\end{itemize}
Then we define the flow-matching architecture as $\F''^{t_i}_i := \mathsf{FM}_i(\wh{\Y_i},\F_i^{t_i},t_i)$, which contains the following computation steps:
\begin{itemize}
    \item Generate parameter conditioned on the timestep,
    \begin{align*}
        \alpha_1, \alpha_2, \beta_1, \beta_2, \gamma_1, \gamma_2:=  \mathsf{MLP}_i(\wh{\Y}_i + t_i, c, 6c)
    \end{align*}
    \item Apply attention mechanism,
    \begin{align*}
        \F'^{t_i}_i:= \mathsf{Attn}_i (\gamma_1 \circ \mathsf{LN}(\F_i^{t_i}) + \beta_1) \circ \alpha_1
    \end{align*}
    with $\circ$ denoting Hadamard (element-wise) product.
    \item Apply MLP and LN modules,
    \begin{align*}
        \F''^{t_i}_i := \mathsf{MLP}_i(\gamma_2 \circ \mathsf{LN}(\F'^{t_i}_i)+ \beta_2,c,c) \circ \alpha_2
    \end{align*}
\end{itemize}
\end{definition}

\section{Main Results} \label{sec:main_results}

In this section, we present our theoretical analysis of the computational efficiency of the HOFAR model. We demonstrate that despite incorporating high-order dynamics supervision, the increase in computational complexity for both training and inference remains marginal compared to the significant performance improvements achieved.

\begin{theorem} [Computational Efficiency of HOFAR] \label{thm:hofar_computational_efficiency}
In accordance with Definition~\ref{def:ar_transformer}, the auto-regressive Transformer architecture incorporates $m$ attention layers. The image input $x_{\mathrm{img}} \in \mathbb{R}^{n \times n \times c}$ is encoded with $n^2$ spatial units, $c$ channels, and a $d$-dimensional latent representation. The HOFAR model demonstrates computational costs of $O(kmn^4d^2)$ for both training and inference under the specified structural constraints.  
\end{theorem}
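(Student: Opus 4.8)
The plan is to bound the cost of a single forward pass through the HOFAR architecture scale-by-scale, and then observe that the $k$-th order augmentation multiplies this by at most a factor of $k$. I will treat the $n^2$ spatial units as the dominant quantity, so that at the coarsest-to-finest hierarchy the flattened sequence fed into $\mathsf{Attn}_i$ at scale $i$ has length $L_i := \sum_{j=1}^i (h/r_j)(w/r_j)$, and crucially $L_i = O(n^2)$ for every $i$ since the token counts form a geometric-type series dominated by the last scale $r_K = 1$. With this in hand, the analysis reduces to accounting for three components: (i) the autoregressive Transformer blocks $\mathsf{TF}_i = \mathsf{FFN}_i \circ \mathsf{Attn}_i$ from Definition~\ref{def:ar_transformer}, (ii) the flow-matching blocks $\mathsf{FM}_i$ from Definition~\ref{def:flow_matching_architecture}, and (iii) the auxiliary linear maps (downsampling $\Phi_{\mathrm{down}}$, bicubic upsampling, MLP, LN).

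First I would cost one attention layer on a length-$L$ sequence with channel width $c$ and latent width $d$. Forming $A_{i,j} = \exp(X_{i,*} W_Q W_K^\top X_{j,*}^\top)$ requires the projections ($O(Lcd)$ for the $W_Q, W_K$ products, or $O(Lc^2)$ depending on which width we attribute to the intermediate dimension) plus the $L^2$ pairwise inner products at $O(L^2 d)$, then the normalization $D^{-1} A X W_V$ at another $O(L^2 c)$; summing, one attention layer costs $O(L^2 d + L c d)$, and with $L = O(n^2)$ this is $O(n^4 d)$ after absorbing $c \le d$ into the latent dimension. A feedforward/MLP layer is $O(L c d) = O(n^2 c d)$, strictly lower order, and the LN, interpolation, and Hadamard steps in $\mathsf{FM}_i$ are $O(L c) = O(n^2 c)$, also lower order. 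The downsampling matrix $\Phi_{\mathrm{down}} \in \R^{(hw/r^2) \times hw}$ applied to $X \in \R^{hw \times c}$ costs $O((hw)^2 c / r^2) = O(n^4 c)$ in the worst case and the bicubic upsampling is $O(n^2 c)$ since each output entry is a fixed $16$-term combination; both are dominated by the attention cost.

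Next I would sum over the $m$ attention layers in the autoregressive Transformer and over the $K$ scales. Each of the $m$ layers is applied to a sequence of length $O(n^2)$, giving $O(m n^4 d)$; running this across all $K$ scales is absorbed because the work is organized so that the per-layer cost is already $O(n^4 d)$ in the worst (finest) scale and the geometric decay of coarser scales contributes only a constant factor — here I should be careful to state the convention that $m$ already counts the total attention-layer applications, matching the theorem's phrasing. Accounting similarly for the flow-matching side adds only lower-order $O(n^2 cd)$ terms per layer. Rolling in a second latent-dimension factor $d$ (the theorem's $d^2$ rather than $d$) comes from treating both the projection width and the attention-score contraction as $\Theta(d)$, i.e. not distinguishing $c$ from $d$ and charging $O(n^4 d^2)$ for the combined project-then-score cost of each attention layer; I would make this identification explicit. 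Finally, the high-order augmentation: the $k$-th order HOFAR supervises $k$ successive Taylor/velocity terms, each obtained by re-invoking the same flow-matching and (at most) Transformer machinery with identical sequence lengths, so the total is multiplied by a factor $k$, yielding $O(k m n^4 d^2)$ for one forward pass; training (forward plus a constant-factor backward pass) and inference (a constant number of such passes across the $K$ autoregressive steps) inherit the same bound.

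The main obstacle I anticipate is bookkeeping the sequence lengths precisely enough to justify $L_i = O(n^2)$ uniformly and to confirm that summing the per-scale costs does not introduce an extra factor of $K$ — this hinges on the geometric relation $r_i = a^{K-i}$ making $\sum_i L_i = O(n^2)$ as well, which is true but should be stated, and on the theorem's implicit convention that $m$ and $k$ absorb the scale count rather than multiplying it. A secondary subtlety is pinning down exactly where the two factors of $d$ arise versus factors of $c$; the cleanest route is to declare $c = O(d)$ (a latent width at least as large as the channel count) up front so that every $c$ can be replaced by $d$, after which all the lower-order terms collapse and only the $O(k m n^4 d^2)$ attention contribution survives.
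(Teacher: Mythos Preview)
Your component-by-component costing is close to the paper's, but you have misread what $k$ denotes. In this theorem $k$ is the number of pyramid scales (the $K$ of Definition~\ref{def:ar_transformer} and the loop bound in Algorithm~\ref{alg:hofar_training}), not the order of the high-order augmentation. HOFAR as written is fixed at second order: the training loop calls exactly $\mathsf{FM}_{\mathrm{first}}$ and $\mathsf{FM}_{\mathrm{second}}$, and that factor of $2$ is absorbed as a constant. The paper obtains the factor $k$ simply by bounding each of the $k$ pyramid iterations by the worst-case (finest-scale) cost and then multiplying by $k$ (Lemma~\ref{lem:hofar_training_running_time} and Lemma~\ref{lem:hofar_inference_running_time}). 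Your geometric-series argument that $\sum_i L_i^2 = O(n^4)$ is actually sharper than what the paper does, but precisely because it absorbs the scale count it leaves you hunting for the $k$ elsewhere, and you land on the wrong source.

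Your attribution of the second factor of $d$ is also off. The paper does not pick up $d^2$ from the attention layer via a $c=d$ identification; Lemma~\ref{lem:autoregressive_transformer_forward_running_time} gives $O(m n^4 d)$ for the Transformer, while Lemma~\ref{lem:flow_matching_forward_running_time} gives $O(n^4 d^2)$ for the flow-matching block, with the $d^2$ coming from its MLP layers (each row times a $c\times d$ weight with both widths taken as $d$). The per-iteration bound $O(m n^4 d^2)$ is then just the loose envelope of $O(m n^4 d) + O(n^4 d^2)$. Your final bound is still a valid upper bound, but the proof should trace the $d^2$ to the flow-matching MLP rather than to an ad hoc recharging of the attention cost.
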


\begin{proof}
The proof follows from Lemma~\ref{lem:hofar_training_running_time} and Lemma \ref{lem:hofar_inference_running_time}. 
\end{proof}

\begin{algorithm}[!ht]\caption{High-Order FlowAR Training} \label{alg:hofar_training}
\begin{algorithmic}[1]
\Procedure{HOFARTraining}{$\theta, D$}
\State {\color{blue} /* $\theta$ denotes the model parameters of $\mathsf{TF}, \mathsf{FM}_{\mathrm{first}}, \mathsf{FM}_{\mathrm{second}}$ */}
\State {\color{blue} /* $D$ denotes the training dataset. */}
\While{not converged}
\State {\color{blue} /* Sample an image from dataset. */}
\State $x_{\mathrm{img}} \sim D$
\State {\color{blue} /* Init loss as $0$. */}
\State $\ell \gets 0$
\State {\color{blue} /* Train the model on $K$ pyramid layers. */}
\For{$i=1 \to K$}
\State {\color{blue} /* Sample random noise. */} \label{line:pyramid_loop_start}
\State $\F^0 \sim \N (0, I)$
\State {\color{blue} /* Sample a random timestep. */}
\State $t \sim [0, 1]$
\State {\color{blue} /* Calculate noisy input. */} \label{line:preparation_start}
\State $\F_{\mathrm{noisy}}^t \gets \alpha_t x_{\mathrm{img}} + \beta_t \F_i^0$ 
\State {\color{blue} /* Calculate first-order ground-truth. */}
\State $\F_{\mathrm{first}}^t \gets \alpha_t' x_{\mathrm{img}} + \beta_t' \F_i^0$
\State {\color{blue} /* Calculate second-order ground-truth. */}
\State $\F_{\mathrm{second}}^t \gets \alpha_t'' x_{\mathrm{img}} + \beta_t'' \F_i^0$ \label{line:preparation_end}
\State {\color{blue} /* Generate condition with Transformer. */}
\State $\wh{\Y} \gets \mathsf{TF}(x_\mathrm{img})$ \label{line:generate_condition_with_transformer}
\State {\color{blue} /* Predict first-order with FM. */} \label{line:flow_matching_pred_start}
\State $\wh{\F}_{\mathrm{first}}^t \gets \mathsf{FM}_{\mathrm{first}}(\F_{\mathrm{noisy}}^t, \wh{\Y})$
\State {\color{blue} /* Predict second-order with FM. */}
\State $\wh{\F}_{\mathrm{second}}^t \gets \mathsf{FM}_{\mathrm{second}}(\F_{\mathrm{noisy}}^t, \wh{\Y})$ \label{line:flow_matching_pred_end}
\State {\color{blue} /* Caculate loss. */}
\State $\ell_c \gets \| \wh{\F}_{\mathrm{first}}^t - \F_{\mathrm{first}}^t \|_2^2 + \| \wh{\F}_{\mathrm{second}}^t - \F_{\mathrm{second}}^t \|_2^2$ \label{line:loss_calculation}
\State $\ell \gets \ell + \ell_c$
\State {\color{blue} /* Downsample $x_{\mathrm{img}}$ for next iteration. */}
\State $x_{\mathrm{img}} \gets \Phi_{\mathrm{down}} x_{\mathrm{img}}$ \label{line:pyramid_loop_end}
\EndFor
\State {\color{blue} /* Optimize parameter $\theta$ with $\ell$. */}
\State $\theta \gets \nabla_\theta ~ \ell$
\EndWhile
\State \Return{$\theta$}
\EndProcedure
\end{algorithmic}
\end{algorithm}

\begin{algorithm}[!ht]\caption{High-Order FlowAR Inference} \label{alg:hofar_inference}
\begin{algorithmic}[1]
\Procedure{HOFARInference}{$c_{\mathrm{input}}$}
\State {\color{blue} /* $c_{\mathrm{input}}$ denotes the condition embedding used for generation. */}
\State {\color{blue} /* Init the Transformer input $x$ with $c_{\mathrm{input}}$. */}
\State $x \gets c_{\mathrm{input}}$
\State {\color{blue} /* Init the $x_{\mathrm{img}}$ with random noise. */}
\State $x_{\mathrm{img}} \gets \N (0, I)$
\State {\color{blue} /* Inference through $K$ pyramid scales. */}
\For{$i=1 \to K$}
\State {\color{blue} /* Pass through the Transformers $\mathsf{TF}$. */} \label{line:inference_loop_start}
\State $\wh{\Y} \gets \mathsf{TF}(x)$ \label{line:inference_transformer_pass}
\State {\color{blue} /* Extract last $i*i$ tokens from $\Y$ as the condition embedding. */}
\State $x_{\mathrm{cond}} \gets \Y[..., -i*i:]$
\State {\color{blue} /* Generate first-order with $\mathsf{FM}_{\mathrm{first}}$. */} \label{line:inference_flow_matching_pred_start}
\State $\wh{y}_{\mathrm{first}} \gets \mathsf{FM}_{\mathrm{first}}(x_{\mathrm{cond}}, x_{\mathrm{img}})$
\State {\color{blue} /* Generate second-order with $\mathsf{FM}_{\mathrm{second}}$. */}
\State $\wh{y}_{second} \gets \mathsf{FM}_{\mathrm{second}}(x_{\mathrm{cond}}, x_{\mathrm{img}})$ \label{line:inference_flow_matching_pred_end}
\State {\color{blue} /* Apply first and second-order terms. */}
\State $x_{\mathrm{img}} \gets x_{\mathrm{img}} + \wh{y}_{\mathrm{first}} \cdot \Delta t + 0.5 \cdot \wh{y}_{\mathrm{second}} \cdot (\Delta t)^2$ \label{line:apply_first_second_term}
\State {\color{blue} /* Upsample $x_{\mathrm{img}}$. */}
\State $x_{\mathrm{img}} \gets \phi_{\mathrm{up}}(x_{\mathrm{img}})$
\State {\color{blue} /* Concatenate upsampled $x_{\mathrm{img}}$ to the Transformer input. */}
\State $x \gets \mathsf{Concat}(x, x_{\mathrm{img}})$ \label{line:inference_loop_end}
\EndFor
\State {\color{blue} /* Return the final image */}
\State \Return{$x_{\mathrm{img}}$}
\EndProcedure
\end{algorithmic}
\end{algorithm}

\section{Technical Overview} \label{sec:technical_overview}

In this section, we present the key lemmas used to prove the main theorem introduced in the previous section. Specifically, we first analyze the computational complexity of each component in auto-regressive Transformers and the Flow-Matching architecture. Then, we integrate these results to derive the overall runtime for both the Transformer and Flow-Matching components.

We begin by analyzing the runtime of the auto-regressive Transformer module.
\begin{lemma} [Running time for Auto-Regressive Transformer Forward] \label{lem:autoregressive_transformer_forward_running_time}
Let the auto-regressive Transformer is defined as in Definition~\ref{def:ar_transformer} and that it contains $m$ attention layers. Let $x_{\mathrm{img}} \in \R^{n \times n \times c}$ be the input image, where $n$ denotes the resolution and $c$ denotes the number of channels, and let $d$ denote the hidden dimension. Under these conditions, the running time for a single forward pass of the auto-regressive Transformer is $O(mn^4d)$.
\end{lemma}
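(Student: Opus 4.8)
The plan is to bound the cost of a single forward pass by summing the cost of the $K$ transformer blocks, each of which consists of an attention layer followed by a feed-forward layer acting on a flattened sequence. First I would identify the sequence length processed at scale $i$: by Definition~\ref{def:ar_transformer}, the $i$-th block operates on $\Z_i$, which has $\sum_{j=1}^i (h/r_j)(w/r_j)$ tokens, each a vector of dimension $c$. With a square input $x_{\mathrm{img}} \in \R^{n \times n \times c}$ and base scaling $a$, the finest scale $i = K$ contributes $\Theta(n^2)$ tokens, so the sequence length at the last block is $L_K = \Theta(n^2)$, and $L_i \le L_K$ for all $i$. Summing the geometric-type series over all scales, $\sum_{i=1}^K L_i = O(n^2)$ as well (dominated by the last scale), but I will be slightly more careful and just bound each of the $K$ blocks by the cost at the largest sequence length $L = \Theta(n^2)$.

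Next I would bound the per-block cost. For the attention layer (Definition~\ref{def:attn_layer}) on a sequence of length $L$ with channel dimension $c$: computing $X W_Q W_K^\top$ costs $O(Lc^2)$, forming all pairwise scores $A \in \R^{L \times L}$ costs $O(L^2 c)$, the normalization $D^{-1} A$ and the product $A X W_V$ cost $O(L^2 c)$ and $O(L^2 c + L c^2)$ respectively; so the attention layer is $O(L^2 c + L c^2)$. For the feed-forward layer (Definition~\ref{def:ffn}), each of the $L$ rows undergoes two affine maps of size $c \times c$, for a total of $O(L c^2)$. Treating the hidden dimension $d$ as the governing width parameter — i.e.\ absorbing $c = O(d)$ — the cost of one transformer block on sequence length $L$ is $O(L^2 d + L d^2) = O(L^2 d)$ once we note $L = \Theta(n^2) \ge d$ in the regime of interest (or, more simply, keep both terms and observe $L d^2 = O(n^2 d^2)$ is absorbed). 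Plugging $L = \Theta(n^2)$ gives $O(n^4 d)$ per block.

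Finally, I would multiply by the number of blocks. Definition~\ref{def:ar_transformer} defines one attention and one feed-forward layer per scale, and the theorem statement posits $m$ attention layers total across the architecture, so the total running time is $O(m \cdot n^4 d) = O(mn^4d)$, as claimed. To be fully rigorous about the summation over scales, I would note $\sum_{i=1}^K L_i^2 = O(n^4)$ since $L_i^2$ decays geometrically from $L_K^2 = \Theta(n^4)$, so even accounting for the varying sequence lengths the single-block analysis at the top scale dominates and the per-layer bound is genuinely $O(n^4 d)$.

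\textbf{Main obstacle.} The one subtlety worth pinning down is the relationship between $m$, $K$, and the per-scale layer count: the definition gives one attention layer per scale (so $K$ of them), whereas the lemma phrases everything in terms of $m$ attention layers, presumably meaning each scale's transformer block is itself a stack of $m$ attention/FFN sublayers. I would state this interpretation explicitly and then the bound follows by multiplying the $O(n^4 d)$ per-sublayer cost by the $O(mK)$ sublayers and observing $\sum_i L_i^2 = O(n^4)$ collapses the $K$ factor, leaving $O(mn^4 d)$. No single step is hard; the care is entirely in the bookkeeping of sequence lengths across scales and in fixing the meaning of $m$.
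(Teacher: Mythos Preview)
Your proposal is correct and follows essentially the same approach as the paper: bound a single attention block on a sequence of length $\Theta(n^2)$ as $O(n^4 d)$, then multiply by the $m$ attention layers. In fact you are more careful than the paper's proof, which ignores the multi-scale sequence-length bookkeeping entirely (it simply treats the input as $n^2$ tokens, analyzes one attention block step by step, and multiplies by $m$) and does not address the $m$ versus $K$ distinction you flag.
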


\begin{proof}
We consider each attention block in the Transformers architecture. 

For each attention block, it consists of the following three steps:

{\bf Step 1: Generate matrices $Q, K, V$.} We need to generate a query vector $q \in \R^d$, a key vector $k \in \R^d$ and a value vector $v \in \R^d$ for each pixel in the original $n \times n$ image $x_{\mathrm{img}}$. After this step, we will have three matrices $Q, K, V \in \R^{n^2 \times d}$. This step takes $O(n^2d)$ time. 

{\bf Step 2: Calculate the attention matrix.} As defined in Definition~\ref{def:attn_layer}, we need to calculate the attention matrix. It takes $O(n^4d)$ to calculate $Q K^\top \in \R^{n^2 \times n^2}$. It takes $O(n^4)$ time to calculate $\exp(QK^\top)$. It takes $O(n^2)$ time to calculate $D = \exp(QK^\top) {\bf 1}_{n^2}$. It takes $O(n^2)$ to calculate the $D^{-1}$. It takes $O(n^4)$ to multiply $D^{-1}$ to each row of $\exp(QK^\top)$. The overall running time is $O(n^4d)$. After this step, we will get the attention matrix $A \in \R^{n^2 \times n^2}$. 

{\bf Step 3: Calculate the final output.} The final step is to calculate $A \cdot V$. Since $A \in \R^{n^2 \times n^2}$ and $V \in \R^{n^2 \times d}$. The running time of this step is $O(n^4 d)$. Therefore, according to the above analysis, the running time for a single attention operation is $O(n^4 d)$. Since there are total $m$ attention layers in the auto-regressive Transformer, the overall running time is $O(mn^4d)$.

\end{proof}

Another crucial component of the HOFAR model is the flow-matching architecture. Following a similar approach, we analyze the computational complexity of the flow-matching model as follows:
\begin{lemma} [Running time for Flow-Matching Forward] \label{lem:flow_matching_forward_running_time}
Let the auto-regressive Transformer $\mathsf{TF}$ be defined as in Definition~\ref{def:ar_transformer} and that the flow-matching architecture is defined as in Definition~\ref{def:flow_matching_architecture}. Let $x_{\mathrm{img}} \in \R^{n \times n \times c}$ denote the image, where $n$ denotes the resolution and $c$ denotes the number of channels, and let $d$ denote the hidden dimension. Under these conditions, we can show that the running time for a single forward pass of the flow-matching architecture is $O(n^4d^2)$.
\end{lemma}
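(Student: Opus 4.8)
The plan is to bound the cost of a single forward pass of $\mathsf{FM}_i$ as defined in Definition~\ref{def:flow_matching_architecture}, working scale by scale through the three computational steps, and then to observe that summing over the $K$ scales does not change the bound by more than a constant. Throughout I treat $h = w = n$, so that at scale $i$ the spatial grid has $(n/r_i)^2 \le n^2$ positions, each carrying a $c$-dimensional vector; since $c \le d$, it suffices to bound everything in terms of $n$ and $d$.

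First I would account for the timestep-conditioned parameter generation: $\mathsf{MLP}_i(\wh{\Y}_i + t_i, c, 6c)$ applies an affine map from dimension $c$ to dimension $6c$ at each of the $\le n^2$ spatial positions, costing $O(n^2 \cdot c \cdot 6c) = O(n^2 d^2)$ time, and the additions producing $\alpha_1,\alpha_2,\beta_1,\beta_2,\gamma_1,\gamma_2$ contribute only lower-order terms. Second, the attention step $\mathsf{Attn}_i(\gamma_1 \circ \mathsf{LN}(\F_i^{t_i}) + \beta_1) \circ \alpha_1$: the $\mathsf{LN}$ and the Hadamard products with $\gamma_1, \beta_1, \alpha_1$ are all $O(n^2 d)$, and by the analysis already carried out in the proof of Lemma~\ref{lem:autoregressive_transformer_forward_running_time}, a single attention layer on $n^2$ tokens of width $c$ costs $O(n^4 d)$ (the dominant terms being $QK^\top$ and $AV$). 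Third, the final $\mathsf{MLP}_i(\gamma_2 \circ \mathsf{LN}(\F'^{t_i}_i) + \beta_2, c, c) \circ \alpha_2$ is again a per-position affine map $c \to c$ plus normalization and Hadamard products, costing $O(n^2 d^2)$. Adding the three steps gives $O(n^4 d + n^2 d^2)$ for one forward pass of $\mathsf{FM}_i$, which I would then bound by $O(n^4 d^2)$ as claimed (absorbing the $n^4 d$ term, since $d \ge 1$, and the $n^2 d^2$ term, since $n^2 \ge 1$).

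For the statement as phrased — "a single forward pass of the flow-matching architecture is $O(n^4 d^2)$" — I would note that even if one runs $\mathsf{FM}_i$ at every scale $i \in [K]$, the grid sizes $(n/r_i)^2$ form a geometrically decreasing sequence dominated by $n^2$, so $\sum_{i=1}^K (n/r_i)^4 d^2 = O(n^4 d^2)$ as well; thus the bound is robust to whether "single forward pass" refers to one scale or the full pyramid sweep. I would also remark that the $O(n^4 d^2)$ figure is slightly loose (the honest bound is $O(n^4 d + n^2 d^2)$), but it is stated this way for clean composition with Lemma~\ref{lem:autoregressive_transformer_forward_running_time} when assembling Theorem~\ref{thm:hofar_computational_efficiency}.

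The only mild subtlety — and the step I would be most careful about — is the $\mathsf{Attn}_i$ cost inside $\mathsf{FM}_i$: here the attention acts on a single scale's token grid of size $(n/r_i)^2$, not on the cumulative sequence $\sum_{j \le i} (n/r_j)^2$ that appears in the autoregressive transformer of Definition~\ref{def:ar_transformer}, so one must read off the correct sequence length from Definition~\ref{def:flow_matching_architecture} rather than reusing the transformer's length verbatim. Once that is pinned down, the $O(n^4 d)$ attention cost and the $O(n^2 d^2)$ MLP costs combine immediately, and there is no real obstacle beyond bookkeeping the constants and confirming $c \le d$ so that all channel-width factors can be upgraded to $d$.
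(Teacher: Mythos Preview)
Your proposal is correct and follows essentially the same approach as the paper: decompose the flow-matching forward pass into its three constituent operations (MLP, attention, MLP), bound them respectively by $O(n^2 d^2)$, $O(n^4 d)$, and $O(n^2 d^2)$, and coarsen the sum to $O(n^4 d^2)$. Your version is more careful (tracking $\mathsf{LN}$ and Hadamard costs, handling the per-scale sequence length, and summing over the pyramid), but the core argument matches the paper's proof.
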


\begin{proof}
Since the input of the flow-matching is the output of the auto-regressive Transformer, which is $\mathsf{TF}(x_{\mathrm{img}}) \in \R^{n^2 \times n^2 \times d}$. According to the definition of flow-matching architecture (Definition~\ref{def:flow_matching_architecture}), it consists of three operations: one MLP layer, one attention layer, and one MLP layer. For the first layer, the MLP layer, the running complexity is $O(n^2d^2)$. 
For the second layer, the attention layer, according to the proof of Lemma~\ref{lem:autoregressive_transformer_forward_running_time}, the running time for this layer is $O(n^4d)$. 
For the third layer, the MLP layer, the running complexity is $O(n^2d^2)$. Therefore, the overall running time for the flow-matching is $O(n^4d^2)$.

\end{proof}

With the runtime analysis of both the Transformer and Flow-Matching modules completed, we now proceed to analyze the training procedure of the HOFAR model. In the following proof, we break down the training process step by step and derive the overall computational complexity at the end.
\begin{lemma} [Running time for HOFAR training] \label{lem:hofar_training_running_time}
Suppose that the auto-regressive Transformer is defined as in Definition~\ref{def:ar_transformer} and contains $m$ attention layers. Let the flow-matching architecture be defined as in Definition~\ref{def:flow_matching_architecture}, and assume that the HOFAR training process is described in Algorithm~\ref{alg:hofar_training}. Furthermore, suppose that HOFAR consists of $k$ pyramid frames, let $d$ denote the hidden dimension, and let $x_{\mathrm{img}} \in \R^{n \times n \times c}$ denote the image with resolution $n$ and $c$ channels. Then, the running time of the training procedure of HOFAR is $O(kmn^4d^2)$.
\end{lemma}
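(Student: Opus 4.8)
The plan is to decompose the training procedure in Algorithm~\ref{alg:hofar_training} into its constituent operations and bound each one using the per-module running times already established in Lemma~\ref{lem:autoregressive_transformer_forward_running_time} and Lemma~\ref{lem:flow_matching_forward_running_time}. The outermost \texttt{while} loop runs until convergence, so we only need to bound the cost of a single iteration (one training step on one sampled image $x_{\mathrm{img}}$); the per-step cost, multiplied by the number of steps, gives the total, and the theorem statement is implicitly a per-step bound. Inside a step, the dominant work is the \texttt{for} loop over the $K=k$ pyramid layers (lines~\ref{line:pyramid_loop_start}--\ref{line:pyramid_loop_end}), so the strategy is: bound the cost of the body of this loop by $O(mn^4d^2)$, then multiply by $k$ to get $O(kmn^4d^2)$.

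First I would account for the cheap preprocessing steps in the loop body (lines~\ref{line:preparation_start}--\ref{line:preparation_end}): sampling noise $\F^0$, sampling a timestep, and forming the noisy input $\F_{\mathrm{noisy}}^t$ and the first- and second-order ground truths $\F_{\mathrm{first}}^t, \F_{\mathrm{second}}^t$ are all elementwise affine combinations of tensors of size at most $n^2 c$, hence $O(n^2 c)$ time, which is dominated by the later terms. Next, the call $\wh{\Y}\gets\mathsf{TF}(x_{\mathrm{img}})$ on line~\ref{line:generate_condition_with_transformer} is a single forward pass of the auto-regressive Transformer with $m$ attention layers, which costs $O(mn^4d)$ by Lemma~\ref{lem:autoregressive_transformer_forward_running_time}. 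Then the two flow-matching predictions on lines~\ref{line:flow_matching_pred_start}--\ref{line:flow_matching_pred_end}, namely $\wh{\F}_{\mathrm{first}}^t\gets\mathsf{FM}_{\mathrm{first}}(\F_{\mathrm{noisy}}^t,\wh{\Y})$ and $\wh{\F}_{\mathrm{second}}^t\gets\mathsf{FM}_{\mathrm{second}}(\F_{\mathrm{noisy}}^t,\wh{\Y})$, each cost $O(n^4d^2)$ by Lemma~\ref{lem:flow_matching_forward_running_time}, and the high-order augmentation only doubles this to $2\cdot O(n^4d^2)=O(n^4d^2)$ — this is precisely the point that "the increase in computational complexity remains marginal." The loss computation on line~\ref{line:loss_calculation} is two squared-norm differences of tensors of size $O(n^2c)$, hence $O(n^2c)$, and the downsampling $x_{\mathrm{img}}\gets\Phi_{\mathrm{down}}x_{\mathrm{img}}$ on line~\ref{line:pyramid_loop_end} is a linear map which, with $\Phi_{\mathrm{down}}$ of the shape in Definition~\ref{def:linear_down_sample_function}, costs at most $O(n^4 c)$ (and is in practice much cheaper due to the local structure of downsampling). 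Summing the loop body: $O(mn^4d)+O(n^4d^2)+O(n^4c)+O(n^2c)=O(mn^4d^2)$, absorbing $c$ into $d$ (the channel count is dominated by the hidden dimension) and noting the Transformer term is dominated when $d\ge m$ or folded in otherwise — in any case the body is $O(mn^4d^2)$.

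Multiplying by the $k$ iterations of the pyramid loop gives $O(kmn^4d^2)$ for a single training step, and the final backprop/optimizer update $\theta\gets\nabla_\theta\ell$ costs the same order as the forward pass (up to a constant factor, by the standard fact that reverse-mode differentiation has the same asymptotic cost as the forward evaluation), so it does not change the bound. This yields the claimed $O(kmn^4d^2)$.

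The main obstacle I anticipate is the bookkeeping around what exactly "resolution $n$" means across pyramid scales: in Definition~\ref{def:ar_transformer} the $i$-th scale operates on a flattened sequence of length $\sum_{j=1}^{i}(h/r_j)(w/r_j)$, so the attention cost at scale $i$ is governed by the \emph{cumulative} token count, not just $n^2$. One must check that this cumulative sum over all $K=k$ scales is still $O(n^4)$ per attention layer — it is, because the token counts form a geometric-type series dominated by the finest scale $n^2$, so $\sum_i (\sum_{j\le i} \text{tokens}_j)^2 = O((n^2)^2) = O(n^4)$ up to the constant absorbed by summing $k$ scales (this is in fact where one of the factors of $k$ is already accounted for, so care is needed not to double-count). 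Being careful to state the per-scale input sizes consistently with the earlier definitions, and to confirm that the constant hidden across "$O(kmn^4d^2)$" genuinely absorbs the channel dimension $c$ and the geometric decay of scales, is the only delicate part; everything else is routine substitution of Lemmas~\ref{lem:autoregressive_transformer_forward_running_time} and~\ref{lem:flow_matching_forward_running_time}.
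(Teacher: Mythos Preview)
Your proposal is correct and follows essentially the same approach as the paper: decompose the loop body into preparation ($O(n^2d)$), the Transformer call ($O(mn^4d)$ via Lemma~\ref{lem:autoregressive_transformer_forward_running_time}), the flow-matching calls ($O(n^4d^2)$ via Lemma~\ref{lem:flow_matching_forward_running_time}), and the loss ($O(n^2d)$), sum to $O(mn^4d^2)$ per iteration, then multiply by $k$. Your additional remarks on the downsampling cost, the backprop cost, and the cumulative-token bookkeeping across scales go beyond what the paper's proof addresses, but they only strengthen the argument and do not change the route.
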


\begin{proof}
We first consider the running time for each pyramid frame in the training loop (Line~\ref{line:pyramid_loop_start} to \ref{line:pyramid_loop_end} in Algorithm~\ref{alg:hofar_training}). In each loop, we first consider time complexity for the preparation of essential variables (Line~\ref{line:preparation_start} to \ref{line:preparation_end}). Since the dimension of each variable in this process is $n \times n \times d$, the running complexity for the preparation process is $O(n^2d)$. Then, we consider the process of generating condition embeddings with Transformer (Line~\ref{line:generate_condition_with_transformer}). According to Lemma~\ref{lem:autoregressive_transformer_forward_running_time}, the running time for this process is $O(mn^4d)$. Next, according to Lemma~\ref{lem:flow_matching_forward_running_time}, the prediction process of the flow-matching models takes $O(n^4d^2)$ time. Finally, the loss calculation step (Line~\ref{line:loss_calculation}) takes $O(n^2d)$ time. 

Therefore, according to all the analysis mentioned above, the running time for each iteration is $O(mn^4d^2)$. 

Since there are total $k$ pyramid frames, the overall running time for the training process is $O(kmn^4d^2)$. 

\end{proof}

Following a similar procedure, we can have the running complexity analysis for the inference procedure as follows:
\begin{lemma} [Running time for HOFAR inference] \label{lem:hofar_inference_running_time}
Let the auto-regressive Transformer be defined as in Definition~\ref{def:ar_transformer} and contain $m$ attention layers, that the flow-matching architecture is defined as in Definition~\ref{def:flow_matching_architecture}, and that the HOFAR inference process is described in Algorithm~\ref{alg:hofar_inference}. Also, suppose there are $k$ pyramid frames in HOFAR and let $d$ denote the hidden dimension. Under these conditions, the running time of the HOFAR inference procedure is $O(kmn^4d^2)$.
\end{lemma}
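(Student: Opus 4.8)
The plan is to mirror the structure of the proof of Lemma~\ref{lem:hofar_training_running_time}, walking through Algorithm~\ref{alg:hofar_inference} one line at a time within a single pass of the pyramid loop (Line~\ref{line:inference_loop_start} to Line~\ref{line:inference_loop_end}) and then multiplying by the number $k$ of pyramid frames. The only structural differences from the training case are that there is no loss term, and that instead there are the explicit second-order Euler update on Line~\ref{line:apply_first_second_term}, a bicubic upsampling step, and a concatenation; I will argue each of these is dominated by the Transformer and flow-matching forward passes, so the per-iteration bottleneck is identical to that in the training analysis.

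Concretely, within one iteration of the loop: the autoregressive Transformer forward pass $\mathsf{TF}(x)$ on Line~\ref{line:inference_transformer_pass} costs $O(mn^4 d)$ by Lemma~\ref{lem:autoregressive_transformer_forward_running_time}; each of the two flow-matching forward passes $\mathsf{FM}_{\mathrm{first}}$ and $\mathsf{FM}_{\mathrm{second}}$ on Lines~\ref{line:inference_flow_matching_pred_start}--\ref{line:inference_flow_matching_pred_end} costs $O(n^4 d^2)$ by Lemma~\ref{lem:flow_matching_forward_running_time}; and the remaining operations — sampling the initial noise, extracting the last $i^2$ conditioning tokens, forming $x_{\mathrm{img}} + \wh{y}_{\mathrm{first}} \cdot \Delta t + 0.5 \cdot \wh{y}_{\mathrm{second}} \cdot (\Delta t)^2$, applying $\phi_{\mathrm{up}}$, and the concatenation — all act entrywise, or with the fixed-size $4 \times 4$ bicubic kernel of Definition~\ref{def:bicubic_up_sample_function}, on tensors of size at most $n \times n \times d$, hence cost $O(n^2 d)$. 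Summing, one iteration costs $O(mn^4 d + n^4 d^2) = O(mn^4 d^2)$.

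The main point to be careful about is that the spatial resolution genuinely changes across the $k$ scales: at scale $i$ the current token map has resolution $n/r_i \le n$, so I will uniformly upper bound every scale's cost by its cost at the full resolution $n$. This keeps the argument consistent with the cited forward-pass lemmas and is harmless, since even summing the true per-scale costs gives $\sum_i (n/r_i)^4 = O(n^4)$. A minor further subtlety is that $\mathsf{FM}$ is invoked in the algorithm with two arguments rather than the three in Definition~\ref{def:flow_matching_architecture}; since this is the same architecture up to how the conditioning input is packaged, Lemma~\ref{lem:flow_matching_forward_running_time} still applies verbatim, and even if the flow integration used a constant number of Euler steps rather than the single step shown, that only multiplies the flow-matching cost by a constant. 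Multiplying the per-iteration bound $O(mn^4 d^2)$ by the $k$ pyramid frames yields the claimed total running time $O(kmn^4 d^2)$, matching the training bound and completing the proof.
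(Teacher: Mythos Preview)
Your proposal is correct and follows essentially the same approach as the paper: analyze one iteration of the inference loop by invoking Lemma~\ref{lem:autoregressive_transformer_forward_running_time} for the Transformer pass, Lemma~\ref{lem:flow_matching_forward_running_time} for the two flow-matching passes, bound the remaining entrywise operations by $O(n^2 d)$, and multiply the resulting $O(mn^4 d^2)$ per-iteration cost by $k$. Your additional remarks on the varying per-scale resolution and the constant number of Euler steps are more careful than what the paper spells out, but they do not change the argument.
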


\begin{proof}
We begin with considering each loop in $k$-th inference (Line~\ref{line:inference_loop_start} to Line~\ref{line:inference_loop_end}). For each loop, according to Lemma~\ref{lem:autoregressive_transformer_forward_running_time}, the Transformer forward pass (Line~\ref{line:inference_transformer_pass}) takes $O(mn^4d)$ time. Next, according to Lemma~\ref{lem:flow_matching_forward_running_time}, the flow-matching prediction process (Line~\ref{line:inference_flow_matching_pred_start} - \ref{line:inference_flow_matching_pred_end}) takes $O(n^4d^2)$ time. Finally, the time for applying the predicted gradient on image (Line~\ref{line:apply_first_second_term}) takes $O(n^2d)$ time. Therefore, the overall running time for each inference loop is $O(mn^4d^2)$. Since there are total $k$ inference loops, the overall running time is $O(kmn^4d^2)$.

\end{proof}

Combining all the analyses discussed above, we can directly arrive at our final theorem (Theorem~\ref{thm:hofar_computational_efficiency}).

\section{Experiments} \label{sec:experiments}

\begin{figure}[!ht]
    \centering
    \includegraphics[width=0.45\linewidth]{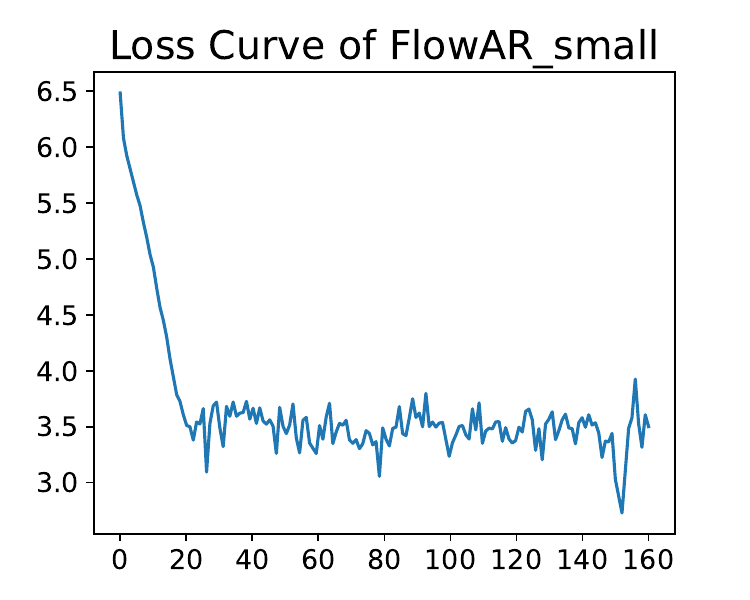}
    \includegraphics[width=0.45\linewidth]{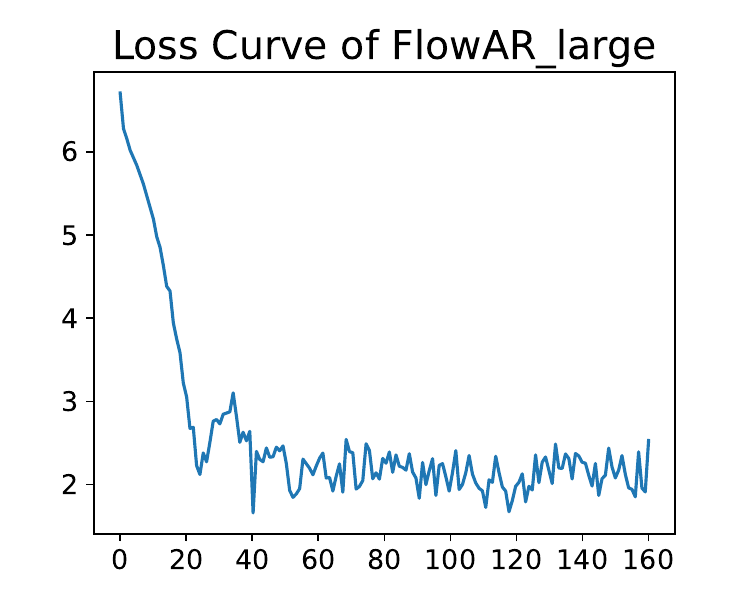}     
    \includegraphics[width=0.45\linewidth]{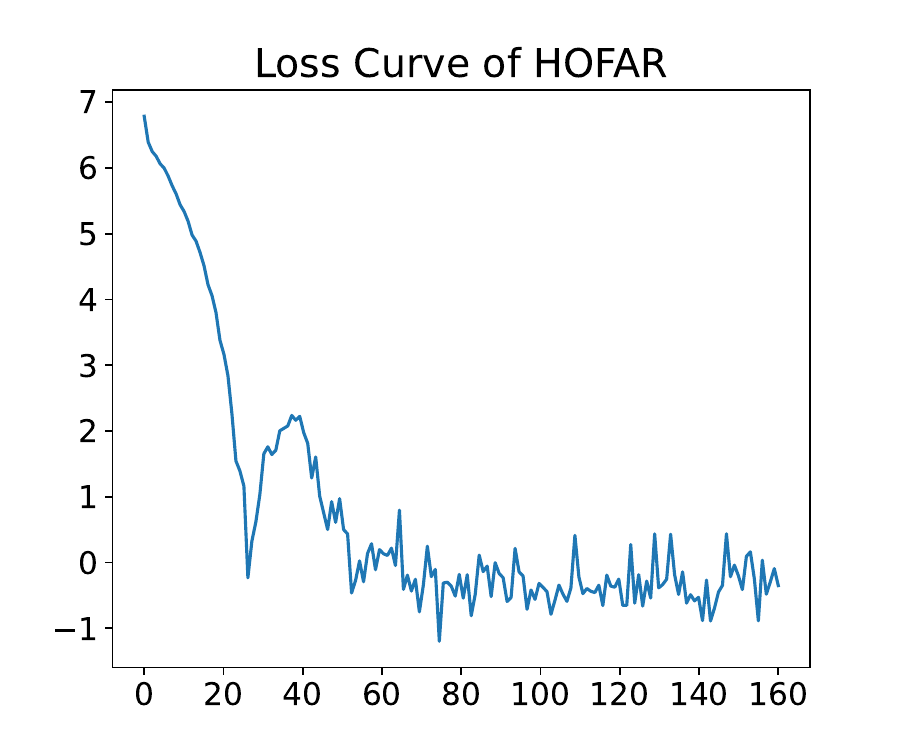}
    \caption{Loss curve of FlowAR-small (\textbf{Left}), loss curve of FlowAR-large (\textbf{Right}) and loss curve of HOFAR (\textbf{Bottom}). }
    \label{fig:app:loss_curve}
\end{figure}

\ifdefined\isarxiv
\newcommand{\mywidth}{0.05}
\else
\newcommand{\mywidth}{0.0581}
\fi

\begin{figure*}[!ht]
\centering

\includegraphics[width=\mywidth\linewidth]{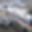}
\includegraphics[width=\mywidth\linewidth]{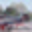}
\includegraphics[width=\mywidth\linewidth]{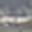}
\includegraphics[width=\mywidth\linewidth]{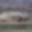}
\includegraphics[width=\mywidth\linewidth]{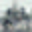}
\includegraphics[width=\mywidth\linewidth]{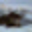}
\includegraphics[width=\mywidth\linewidth]{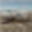}
\includegraphics[width=\mywidth\linewidth]{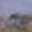}
\includegraphics[width=\mywidth\linewidth]{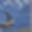}
\includegraphics[width=\mywidth\linewidth]{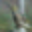}
\includegraphics[width=\mywidth\linewidth]{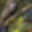}
\includegraphics[width=\mywidth\linewidth]{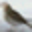}
\includegraphics[width=\mywidth\linewidth]{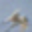}
\includegraphics[width=\mywidth\linewidth]{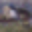}
\includegraphics[width=\mywidth\linewidth]{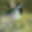}
\includegraphics[width=\mywidth\linewidth]{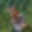} \\
\includegraphics[width=\mywidth\linewidth]{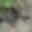}
\includegraphics[width=\mywidth\linewidth]{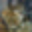}
\includegraphics[width=\mywidth\linewidth]{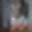}
\includegraphics[width=\mywidth\linewidth]{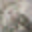}
\includegraphics[width=\mywidth\linewidth]{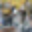}
\includegraphics[width=\mywidth\linewidth]{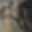}
\includegraphics[width=\mywidth\linewidth]{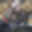}
\includegraphics[width=\mywidth\linewidth]{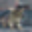}
\includegraphics[width=\mywidth\linewidth]{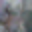}
\includegraphics[width=\mywidth\linewidth]{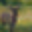}
\includegraphics[width=\mywidth\linewidth]{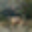}
\includegraphics[width=\mywidth\linewidth]{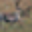}
\includegraphics[width=\mywidth\linewidth]{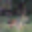}
\includegraphics[width=\mywidth\linewidth]{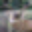}
\includegraphics[width=\mywidth\linewidth]{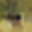}
\includegraphics[width=\mywidth\linewidth]{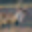} \\
\includegraphics[width=\mywidth\linewidth]{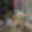}
\includegraphics[width=\mywidth\linewidth]{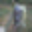}
\includegraphics[width=\mywidth\linewidth]{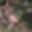}
\includegraphics[width=\mywidth\linewidth]{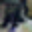}
\includegraphics[width=\mywidth\linewidth]{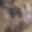}
\includegraphics[width=\mywidth\linewidth]{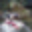}
\includegraphics[width=\mywidth\linewidth]{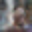}
\includegraphics[width=\mywidth\linewidth]{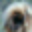}
\includegraphics[width=\mywidth\linewidth]{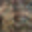}
\includegraphics[width=\mywidth\linewidth]{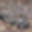}
\includegraphics[width=\mywidth\linewidth]{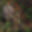}
\includegraphics[width=\mywidth\linewidth]{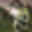}
\includegraphics[width=\mywidth\linewidth]{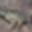}
\includegraphics[width=\mywidth\linewidth]{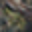}
\includegraphics[width=\mywidth\linewidth]{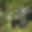}
\includegraphics[width=\mywidth\linewidth]{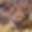} \\
\includegraphics[width=\mywidth\linewidth]{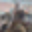}
\includegraphics[width=\mywidth\linewidth]{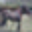}
\includegraphics[width=\mywidth\linewidth]{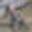}
\includegraphics[width=\mywidth\linewidth]{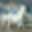}
\includegraphics[width=\mywidth\linewidth]{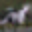}
\includegraphics[width=\mywidth\linewidth]{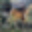}
\includegraphics[width=\mywidth\linewidth]{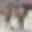}
\includegraphics[width=\mywidth\linewidth]{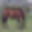} 
\includegraphics[width=\mywidth\linewidth]{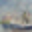}
\includegraphics[width=\mywidth\linewidth]{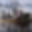}
\includegraphics[width=\mywidth\linewidth]{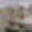}
\includegraphics[width=\mywidth\linewidth]{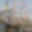}
\includegraphics[width=\mywidth\linewidth]{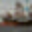}
\includegraphics[width=\mywidth\linewidth]{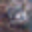}
\includegraphics[width=\mywidth\linewidth]{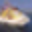}
\includegraphics[width=\mywidth\linewidth]{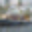} \\

(a) Images generated by FlowAR-small. \\ [1.0mm]

\includegraphics[width=\mywidth\linewidth]{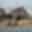}
\includegraphics[width=\mywidth\linewidth]{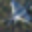}
\includegraphics[width=\mywidth\linewidth]{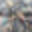}
\includegraphics[width=\mywidth\linewidth]{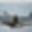}
\includegraphics[width=\mywidth\linewidth]{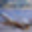}
\includegraphics[width=\mywidth\linewidth]{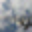}
\includegraphics[width=\mywidth\linewidth]{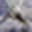}
\includegraphics[width=\mywidth\linewidth]{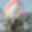}
\includegraphics[width=\mywidth\linewidth]{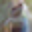}
\includegraphics[width=\mywidth\linewidth]{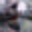}
\includegraphics[width=\mywidth\linewidth]{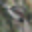}
\includegraphics[width=\mywidth\linewidth]{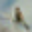}
\includegraphics[width=\mywidth\linewidth]{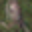}
\includegraphics[width=\mywidth\linewidth]{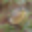}
\includegraphics[width=\mywidth\linewidth]{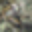}
\includegraphics[width=\mywidth\linewidth]{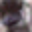} \\
\includegraphics[width=\mywidth\linewidth]{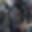}
\includegraphics[width=\mywidth\linewidth]{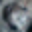}
\includegraphics[width=\mywidth\linewidth]{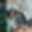}
\includegraphics[width=\mywidth\linewidth]{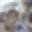}
\includegraphics[width=\mywidth\linewidth]{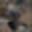}
\includegraphics[width=\mywidth\linewidth]{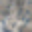}
\includegraphics[width=\mywidth\linewidth]{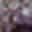}
\includegraphics[width=\mywidth\linewidth]{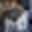}
\includegraphics[width=\mywidth\linewidth]{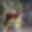}
\includegraphics[width=\mywidth\linewidth]{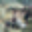}
\includegraphics[width=\mywidth\linewidth]{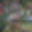}
\includegraphics[width=\mywidth\linewidth]{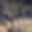}
\includegraphics[width=\mywidth\linewidth]{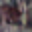}
\includegraphics[width=\mywidth\linewidth]{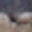}
\includegraphics[width=\mywidth\linewidth]{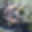}
\includegraphics[width=\mywidth\linewidth]{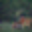} \\
\includegraphics[width=\mywidth\linewidth]{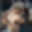}
\includegraphics[width=\mywidth\linewidth]{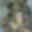}
\includegraphics[width=\mywidth\linewidth]{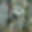}
\includegraphics[width=\mywidth\linewidth]{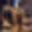}
\includegraphics[width=\mywidth\linewidth]{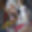}
\includegraphics[width=\mywidth\linewidth]{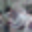}
\includegraphics[width=\mywidth\linewidth]{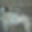}
\includegraphics[width=\mywidth\linewidth]{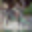}
\includegraphics[width=\mywidth\linewidth]{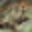}
\includegraphics[width=\mywidth\linewidth]{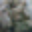}
\includegraphics[width=\mywidth\linewidth]{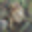}
\includegraphics[width=\mywidth\linewidth]{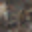}
\includegraphics[width=\mywidth\linewidth]{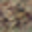}
\includegraphics[width=\mywidth\linewidth]{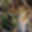}
\includegraphics[width=\mywidth\linewidth]{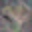}
\includegraphics[width=\mywidth\linewidth]{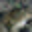} \\
\includegraphics[width=\mywidth\linewidth]{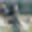}
\includegraphics[width=\mywidth\linewidth]{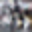}
\includegraphics[width=\mywidth\linewidth]{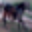}
\includegraphics[width=\mywidth\linewidth]{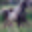}
\includegraphics[width=\mywidth\linewidth]{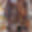}
\includegraphics[width=\mywidth\linewidth]{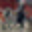}
\includegraphics[width=\mywidth\linewidth]{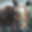}
\includegraphics[width=\mywidth\linewidth]{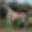} 
\includegraphics[width=\mywidth\linewidth]{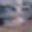}
\includegraphics[width=\mywidth\linewidth]{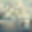}
\includegraphics[width=\mywidth\linewidth]{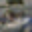}
\includegraphics[width=\mywidth\linewidth]{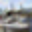}
\includegraphics[width=\mywidth\linewidth]{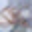}
\includegraphics[width=\mywidth\linewidth]{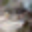}
\includegraphics[width=\mywidth\linewidth]{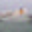}
\includegraphics[width=\mywidth\linewidth]{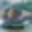} \\

(b) Images generated by FlowAR-large. \\ [1.0mm]

\includegraphics[width=\mywidth\linewidth]{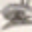}
\includegraphics[width=\mywidth\linewidth]{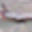}
\includegraphics[width=\mywidth\linewidth]{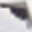}
\includegraphics[width=\mywidth\linewidth]{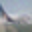}
\includegraphics[width=\mywidth\linewidth]{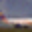}
\includegraphics[width=\mywidth\linewidth]{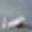}
\includegraphics[width=\mywidth\linewidth]{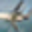}
\includegraphics[width=\mywidth\linewidth]{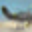} 
\includegraphics[width=\mywidth\linewidth]{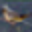}
\includegraphics[width=\mywidth\linewidth]{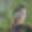}
\includegraphics[width=\mywidth\linewidth]{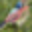}
\includegraphics[width=\mywidth\linewidth]{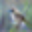}
\includegraphics[width=\mywidth\linewidth]{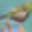}
\includegraphics[width=\mywidth\linewidth]{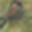}
\includegraphics[width=\mywidth\linewidth]{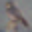}
\includegraphics[width=\mywidth\linewidth]{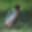} \\
\includegraphics[width=\mywidth\linewidth]{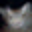}
\includegraphics[width=\mywidth\linewidth]{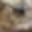}
\includegraphics[width=\mywidth\linewidth]{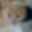}
\includegraphics[width=\mywidth\linewidth]{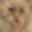}
\includegraphics[width=\mywidth\linewidth]{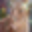}
\includegraphics[width=\mywidth\linewidth]{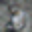}
\includegraphics[width=\mywidth\linewidth]{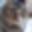}
\includegraphics[width=\mywidth\linewidth]{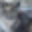}
\includegraphics[width=\mywidth\linewidth]{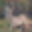}
\includegraphics[width=\mywidth\linewidth]{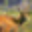}
\includegraphics[width=\mywidth\linewidth]{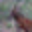}
\includegraphics[width=\mywidth\linewidth]{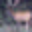}
\includegraphics[width=\mywidth\linewidth]{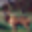}
\includegraphics[width=\mywidth\linewidth]{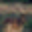}
\includegraphics[width=\mywidth\linewidth]{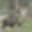}
\includegraphics[width=\mywidth\linewidth]{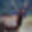} \\
\includegraphics[width=\mywidth\linewidth]{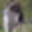}
\includegraphics[width=\mywidth\linewidth]{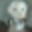}
\includegraphics[width=\mywidth\linewidth]{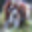}
\includegraphics[width=\mywidth\linewidth]{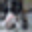}
\includegraphics[width=\mywidth\linewidth]{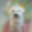}
\includegraphics[width=\mywidth\linewidth]{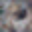}
\includegraphics[width=\mywidth\linewidth]{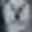}
\includegraphics[width=\mywidth\linewidth]{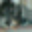}
\includegraphics[width=\mywidth\linewidth]{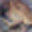}
\includegraphics[width=\mywidth\linewidth]{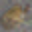}
\includegraphics[width=\mywidth\linewidth]{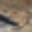}
\includegraphics[width=\mywidth\linewidth]{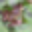}
\includegraphics[width=\mywidth\linewidth]{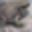}
\includegraphics[width=\mywidth\linewidth]{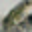}
\includegraphics[width=\mywidth\linewidth]{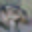}
\includegraphics[width=\mywidth\linewidth]{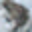} \\
\includegraphics[width=\mywidth\linewidth]{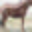}
\includegraphics[width=\mywidth\linewidth]{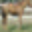}
\includegraphics[width=\mywidth\linewidth]{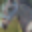}
\includegraphics[width=\mywidth\linewidth]{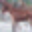}
\includegraphics[width=\mywidth\linewidth]{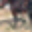}
\includegraphics[width=\mywidth\linewidth]{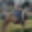}
\includegraphics[width=\mywidth\linewidth]{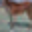}
\includegraphics[width=\mywidth\linewidth]{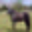} 
\includegraphics[width=\mywidth\linewidth]{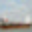}
\includegraphics[width=\mywidth\linewidth]{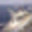}
\includegraphics[width=\mywidth\linewidth]{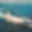}
\includegraphics[width=\mywidth\linewidth]{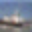}
\includegraphics[width=\mywidth\linewidth]{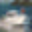}
\includegraphics[width=\mywidth\linewidth]{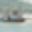}
\includegraphics[width=\mywidth\linewidth]{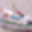}
\includegraphics[width=\mywidth\linewidth]{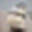} \\

(c) Images generated by HOFAR. \\ 

\caption{Comparison of 32*32 CIFAR-10 images generation by FlowAR-small (\textbf{first four lines}), FlowAR-large (\textbf{second four lines}) and HOFAR (\textbf{last four lines}). For better looking, we put higher-resolution version of Figure~\ref{fig:app:visual_flowar}, Figure~\ref{fig:app:visual_flowar_l} and Figure~\ref{fig:app:visual_hofar} here. }
\label{fig:comparision}
\end{figure*}

In Section ~\ref{sec:app:experiment_setup}, we introduce the setting we used in our experiments. In Section~\ref{sec:app:loss_curve}, we present the loss curve of the various models. In Section~\ref{sec:visualization_comparison}, we present visualization examples produced by the FlowAR-small, FlowAR-large and HOFAR, highlighting differences in color accuracy and generation quality on CIFAR-10 images. 

\subsection{Experiment Setup}\label{sec:app:experiment_setup}
In FlowAR-small, we employ an embedding with three dimensions, and its Autoregressive component is configured with a 1024-dimensional feature space across a depth of 2 layers. Additionally, the flow-matching component is realized through a single hidden layer MLP operating with a step increment of 25. 
By comparison, FlowAR-large distinguishes itself by utilizing an eight dimension embedding and extending the Autoregressive feature dimension to 1536, while retaining the same configuration for the remaining components as in FlowAR-small. 
In the case of HOFAR, an embedding of dimension three is similarly adopted, paired with a 1024 dimension Autoregressive component structured over two layers, and a single-hidden-layer MLP is again employed for flow-matching with 25 steps. All three models were evaluated on the CIFAR-10 dataset, with analysis restricted to 8 classes due to computational constraints. 
All models above use AdamW optimizer with 0.0001 learning rate. 
In all experiments, the models were optimized by minimizing the sum of squared errors (SSE), and performance assessment during testing was based on the Euclidean distance metric. 
Regarding the target transport trajectory, we integrated the VP ODE framework as described in~\cite{lgl22}, represented by 
$x_t = \alpha_t x_0 + \beta_t x_1$. Here, $\alpha_t$ is defined as 
$\exp\left(-\frac{1}{4}a(1-t)^2 - \frac{1}{2}b(1-t)\right)$, and $\beta_t$ is determined by 
$\sqrt{1 - \alpha_t^2}$,with the hyperparameters fixed at $a = 19.9$ and $b = 0.1$. 
During generation, the eight distinct training labels were provided as input, and a consistent $cfg$ value of 4.3 was maintained for all three models.

\subsection{Loss Function Curve}\label{sec:app:loss_curve}
Now, we present the testing loss curves of the various models during training, providing insights into their convergence behavior and learning dynamics. Figure \ref{fig:app:loss_curve} illustrates the loss for FlowAR-small, FlowAR-large, and our HOFAR, with the respective model parameter counts being 170.70M, 222.72M, and 212.44M.

\subsection{Visualization Comparison}\label{sec:visualization_comparison}
As Figure~\ref {fig:comparision} shows, the visualization instances generated by the FlowAR-small, FlowAR-large and HOFAR models are delineated in this study. Each model uses the same prompt at the corresponding position.

\section{Conclusion}\label{sec:conclusion}

In this work, we presented High-Order FlowAR (HOFAR), a novel framework that integrates high-order dynamics into flow-matching-based auto-regressive generation. By modeling higher-order interactions, HOFAR enhances the ability to capture complex dependencies, leading to improved realism, coherence in generative tasks. Our theoretical analysis demonstrates that HOFAR maintains computational efficiency while benefiting from high-order. Empirical evaluations further validate the superiority of HOFAR over existing auto-regressive generative models. These contributions highlight the potential of incorporating high-order dynamics into generative frameworks, paving the way for more advanced generative models in the future.

\ifdefined\isarxiv
%\section*{Acknowledgments}
\bibliographystyle{alpha}
\bibliography{ref}
\else

{
    \small
    \bibliographystyle{ieeenat_fullname}
    \bibliography{ref}
}

\fi

\newpage
\onecolumn
\appendix

\begin{center}
    \textbf{\LARGE Appendix }
\end{center}
\paragraph{Roadmap.}  
In Section~\ref{sec:app:discussion}, we analyze the strengths and limitations of the High-Order FlowAR (HOFAR) framework. 
In Section~\ref{sec:app:empirical_result}, we exhibit some result obtained from the experiments.

\section{Discussion} \label{sec:app:discussion}

The HOFAR framework introduces a novel approach to integrating high-order dynamics into flow-matching-based auto-regressive generation, significantly improving the modeling of complex dependencies and generation quality. However, certain limitations and future directions deserve attention. 
One limitation is the potential computational overhead when scaling HOFAR to extremely high-dimensional data, such as ultra-high-resolution images or long-duration videos. While HOFAR maintains theoretical efficiency, practical implementation may require further optimization to handle such scenarios.
Future work could explore extending HOFAR to multi-modal generation tasks, such as joint text-video or text-3D generation, where capturing long-term coherence across modalities is critical. 
Furthermore, improving the interpretability of high-order dynamics through visualization or disentanglement techniques would broaden HOFAR's applicability.

\section{Empirical Result}\label{sec:app:empirical_result}
In Section~\ref{sec:app:visualization_examples}, we compare visualizations generated by FlowAR and our HOFAR, this highlighting differences in color accuracy and relative position on CIFAR-10 images.

\subsection{Visualization Examples}\label{sec:app:visualization_examples}
We present visualization examples produced by the FlowAR-small, FlowAR-large and proposed HOFAR. Specifically, Figure~\ref{fig:app:visual_flowar} showcases visualizations generated by the FlowAR-small model, Figure~\ref{fig:app:visual_flowar_l} showcases visualizations generated by the FlowAR-large model, whereas Figure~\ref{fig:app:visual_hofar} highlights visualizations created by the HOFAR model.

\begin{figure}[!ht]
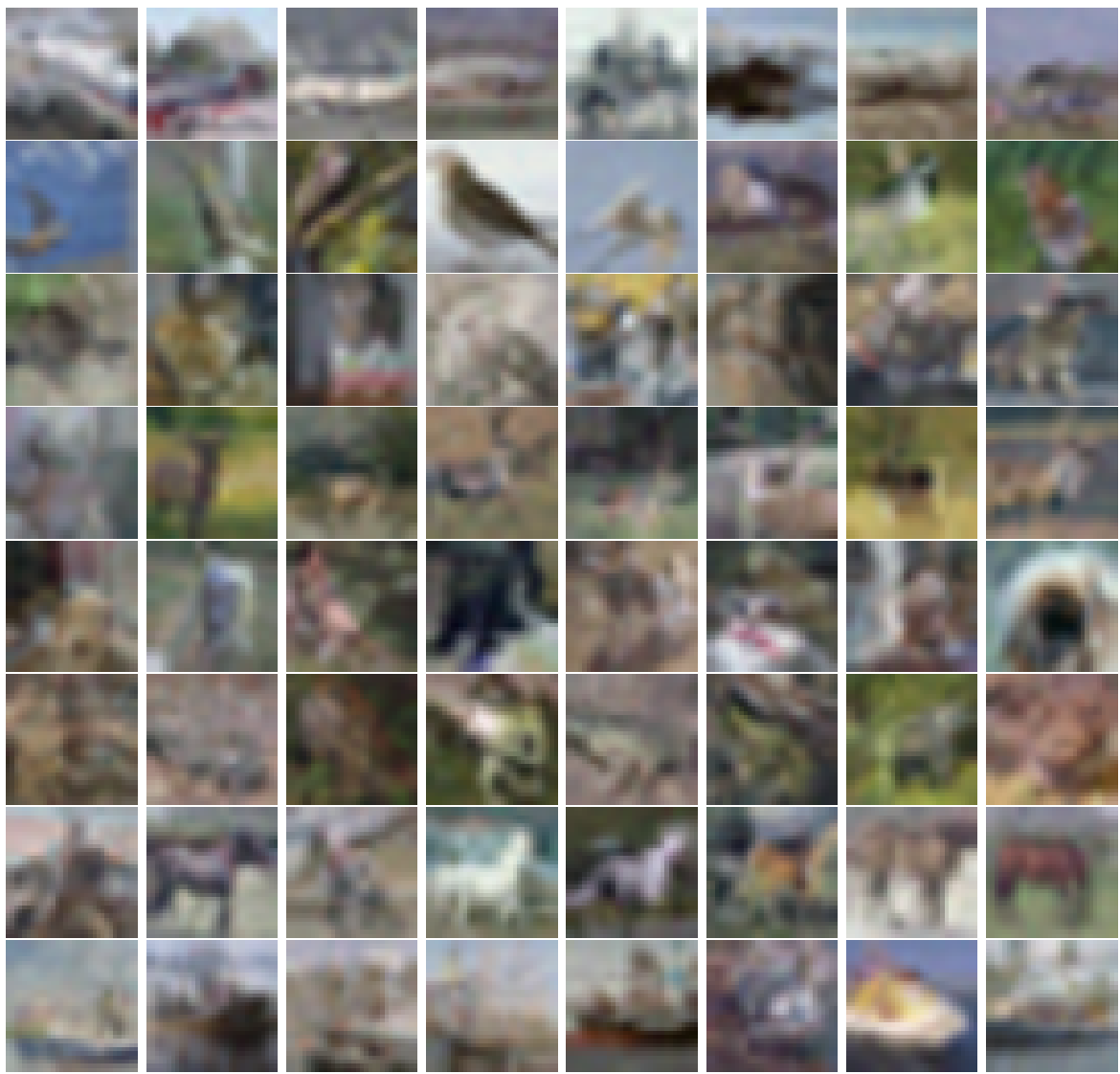

\centering
\includegraphics[width=0.11\linewidth]{figs/00_flowar.png}
\includegraphics[width=0.11\linewidth]{figs/01_flowar.png}
\includegraphics[width=0.11\linewidth]{figs/02_flowar.png}
\includegraphics[width=0.11\linewidth]{figs/03_flowar.png}
\includegraphics[width=0.11\linewidth]{figs/04_flowar.png}
\includegraphics[width=0.11\linewidth]{figs/05_flowar.png}
\includegraphics[width=0.11\linewidth]{figs/06_flowar.png}
\includegraphics[width=0.11\linewidth]{figs/07_flowar.png} \\
\includegraphics[width=0.11\linewidth]{figs/10_flowar.png}
\includegraphics[width=0.11\linewidth]{figs/11_flowar.png}
\includegraphics[width=0.11\linewidth]{figs/12_flowar.png}
\includegraphics[width=0.11\linewidth]{figs/13_flowar.png}
\includegraphics[width=0.11\linewidth]{figs/14_flowar.png}
\includegraphics[width=0.11\linewidth]{figs/15_flowar.png}
\includegraphics[width=0.11\linewidth]{figs/16_flowar.png}
\includegraphics[width=0.11\linewidth]{figs/17_flowar.png} \\
\includegraphics[width=0.11\linewidth]{figs/20_flowar.png}
\includegraphics[width=0.11\linewidth]{figs/21_flowar.png}
\includegraphics[width=0.11\linewidth]{figs/22_flowar.png}
\includegraphics[width=0.11\linewidth]{figs/23_flowar.png}
\includegraphics[width=0.11\linewidth]{figs/24_flowar.png}
\includegraphics[width=0.11\linewidth]{figs/25_flowar.png}
\includegraphics[width=0.11\linewidth]{figs/26_flowar.png}
\includegraphics[width=0.11\linewidth]{figs/27_flowar.png} \\
\includegraphics[width=0.11\linewidth]{figs/30_flowar.png}
\includegraphics[width=0.11\linewidth]{figs/31_flowar.png}
\includegraphics[width=0.11\linewidth]{figs/32_flowar.png}
\includegraphics[width=0.11\linewidth]{figs/33_flowar.png}
\includegraphics[width=0.11\linewidth]{figs/34_flowar.png}
\includegraphics[width=0.11\linewidth]{figs/35_flowar.png}
\includegraphics[width=0.11\linewidth]{figs/36_flowar.png}
\includegraphics[width=0.11\linewidth]{figs/37_flowar.png} \\
\includegraphics[width=0.11\linewidth]{figs/40_flowar.png}
\includegraphics[width=0.11\linewidth]{figs/41_flowar.png}
\includegraphics[width=0.11\linewidth]{figs/42_flowar.png}
\includegraphics[width=0.11\linewidth]{figs/43_flowar.png}
\includegraphics[width=0.11\linewidth]{figs/44_flowar.png}
\includegraphics[width=0.11\linewidth]{figs/45_flowar.png}
\includegraphics[width=0.11\linewidth]{figs/46_flowar.png}
\includegraphics[width=0.11\linewidth]{figs/47_flowar.png} \\
\includegraphics[width=0.11\linewidth]{figs/50_flowar.png}
\includegraphics[width=0.11\linewidth]{figs/51_flowar.png}
\includegraphics[width=0.11\linewidth]{figs/52_flowar.png}
\includegraphics[width=0.11\linewidth]{figs/53_flowar.png}
\includegraphics[width=0.11\linewidth]{figs/54_flowar.png}
\includegraphics[width=0.11\linewidth]{figs/55_flowar.png}
\includegraphics[width=0.11\linewidth]{figs/56_flowar.png}
\includegraphics[width=0.11\linewidth]{figs/57_flowar.png} \\
\includegraphics[width=0.11\linewidth]{figs/60_flowar.png}
\includegraphics[width=0.11\linewidth]{figs/61_flowar.png}
\includegraphics[width=0.11\linewidth]{figs/62_flowar.png}
\includegraphics[width=0.11\linewidth]{figs/63_flowar.png}
\includegraphics[width=0.11\linewidth]{figs/64_flowar.png}
\includegraphics[width=0.11\linewidth]{figs/65_flowar.png}
\includegraphics[width=0.11\linewidth]{figs/66_flowar.png}
\includegraphics[width=0.11\linewidth]{figs/67_flowar.png} \\
\includegraphics[width=0.11\linewidth]{figs/70_flowar.png}
\includegraphics[width=0.11\linewidth]{figs/71_flowar.png}
\includegraphics[width=0.11\linewidth]{figs/72_flowar.png}
\includegraphics[width=0.11\linewidth]{figs/73_flowar.png}
\includegraphics[width=0.11\linewidth]{figs/74_flowar.png}
\includegraphics[width=0.11\linewidth]{figs/75_flowar.png}
\includegraphics[width=0.11\linewidth]{figs/76_flowar.png}
\includegraphics[width=0.11\linewidth]{figs/77_flowar.png}
\caption{64 32*32 images generated by FlowAR-small. }
\label{fig:app:visual_flowar}
\end{figure}

\begin{figure}[!ht]
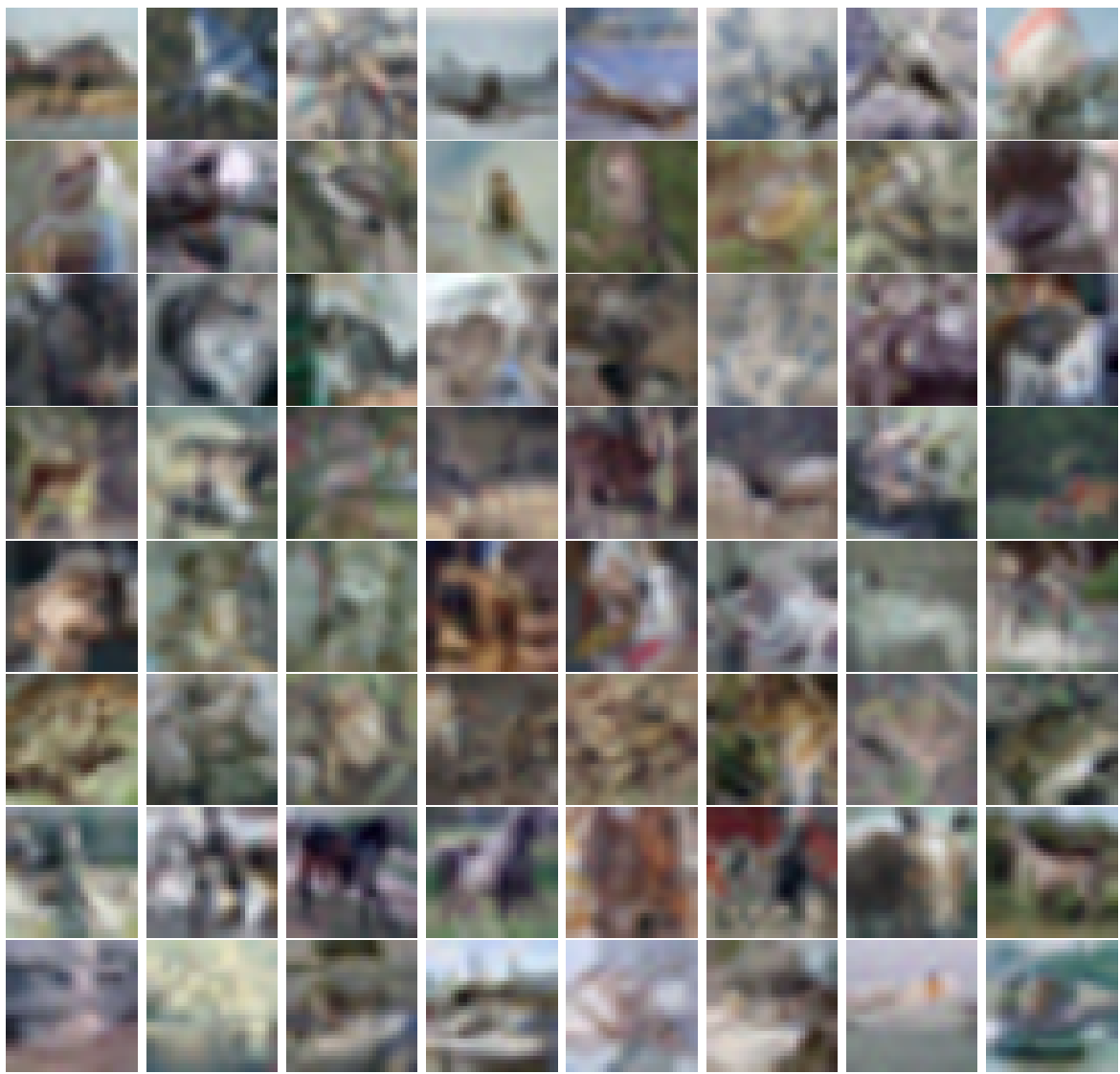

\centering
\includegraphics[width=0.11\linewidth]{figs/00_flowar_l.png}
\includegraphics[width=0.11\linewidth]{figs/01_flowar_l.png}
\includegraphics[width=0.11\linewidth]{figs/02_flowar_l.png}
\includegraphics[width=0.11\linewidth]{figs/03_flowar_l.png}
\includegraphics[width=0.11\linewidth]{figs/04_flowar_l.png}
\includegraphics[width=0.11\linewidth]{figs/05_flowar_l.png}
\includegraphics[width=0.11\linewidth]{figs/06_flowar_l.png}
\includegraphics[width=0.11\linewidth]{figs/07_flowar_l.png} \\
\includegraphics[width=0.11\linewidth]{figs/10_flowar_l.png}
\includegraphics[width=0.11\linewidth]{figs/11_flowar_l.png}
\includegraphics[width=0.11\linewidth]{figs/12_flowar_l.png}
\includegraphics[width=0.11\linewidth]{figs/13_flowar_l.png}
\includegraphics[width=0.11\linewidth]{figs/14_flowar_l.png}
\includegraphics[width=0.11\linewidth]{figs/15_flowar_l.png}
\includegraphics[width=0.11\linewidth]{figs/16_flowar_l.png}
\includegraphics[width=0.11\linewidth]{figs/17_flowar_l.png} \\
\includegraphics[width=0.11\linewidth]{figs/20_flowar_l.png}
\includegraphics[width=0.11\linewidth]{figs/21_flowar_l.png}
\includegraphics[width=0.11\linewidth]{figs/22_flowar_l.png}
\includegraphics[width=0.11\linewidth]{figs/23_flowar_l.png}
\includegraphics[width=0.11\linewidth]{figs/24_flowar_l.png}
\includegraphics[width=0.11\linewidth]{figs/25_flowar_l.png}
\includegraphics[width=0.11\linewidth]{figs/26_flowar_l.png}
\includegraphics[width=0.11\linewidth]{figs/27_flowar_l.png} \\
\includegraphics[width=0.11\linewidth]{figs/30_flowar_l.png}
\includegraphics[width=0.11\linewidth]{figs/31_flowar_l.png}
\includegraphics[width=0.11\linewidth]{figs/32_flowar_l.png}
\includegraphics[width=0.11\linewidth]{figs/33_flowar_l.png}
\includegraphics[width=0.11\linewidth]{figs/34_flowar_l.png}
\includegraphics[width=0.11\linewidth]{figs/35_flowar_l.png}
\includegraphics[width=0.11\linewidth]{figs/36_flowar_l.png}
\includegraphics[width=0.11\linewidth]{figs/37_flowar_l.png} \\
\includegraphics[width=0.11\linewidth]{figs/40_flowar_l.png}
\includegraphics[width=0.11\linewidth]{figs/41_flowar_l.png}
\includegraphics[width=0.11\linewidth]{figs/42_flowar_l.png}
\includegraphics[width=0.11\linewidth]{figs/43_flowar_l.png}
\includegraphics[width=0.11\linewidth]{figs/44_flowar_l.png}
\includegraphics[width=0.11\linewidth]{figs/45_flowar_l.png}
\includegraphics[width=0.11\linewidth]{figs/46_flowar_l.png}
\includegraphics[width=0.11\linewidth]{figs/47_flowar_l.png} \\
\includegraphics[width=0.11\linewidth]{figs/50_flowar_l.png}
\includegraphics[width=0.11\linewidth]{figs/51_flowar_l.png}
\includegraphics[width=0.11\linewidth]{figs/52_flowar_l.png}
\includegraphics[width=0.11\linewidth]{figs/53_flowar_l.png}
\includegraphics[width=0.11\linewidth]{figs/54_flowar_l.png}
\includegraphics[width=0.11\linewidth]{figs/55_flowar_l.png}
\includegraphics[width=0.11\linewidth]{figs/56_flowar_l.png}
\includegraphics[width=0.11\linewidth]{figs/57_flowar_l.png} \\
\includegraphics[width=0.11\linewidth]{figs/60_flowar_l.png}
\includegraphics[width=0.11\linewidth]{figs/61_flowar_l.png}
\includegraphics[width=0.11\linewidth]{figs/62_flowar_l.png}
\includegraphics[width=0.11\linewidth]{figs/63_flowar_l.png}
\includegraphics[width=0.11\linewidth]{figs/64_flowar_l.png}
\includegraphics[width=0.11\linewidth]{figs/65_flowar_l.png}
\includegraphics[width=0.11\linewidth]{figs/66_flowar_l.png}
\includegraphics[width=0.11\linewidth]{figs/67_flowar_l.png} \\
\includegraphics[width=0.11\linewidth]{figs/70_flowar_l.png}
\includegraphics[width=0.11\linewidth]{figs/71_flowar_l.png}
\includegraphics[width=0.11\linewidth]{figs/72_flowar_l.png}
\includegraphics[width=0.11\linewidth]{figs/73_flowar_l.png}
\includegraphics[width=0.11\linewidth]{figs/74_flowar_l.png}
\includegraphics[width=0.11\linewidth]{figs/75_flowar_l.png}
\includegraphics[width=0.11\linewidth]{figs/76_flowar_l.png}
\includegraphics[width=0.11\linewidth]{figs/77_flowar_l.png}
\caption{64 32*32 images generated by FlowAR-large. }
\label{fig:app:visual_flowar_l}
\end{figure}

\begin{figure}[!ht]
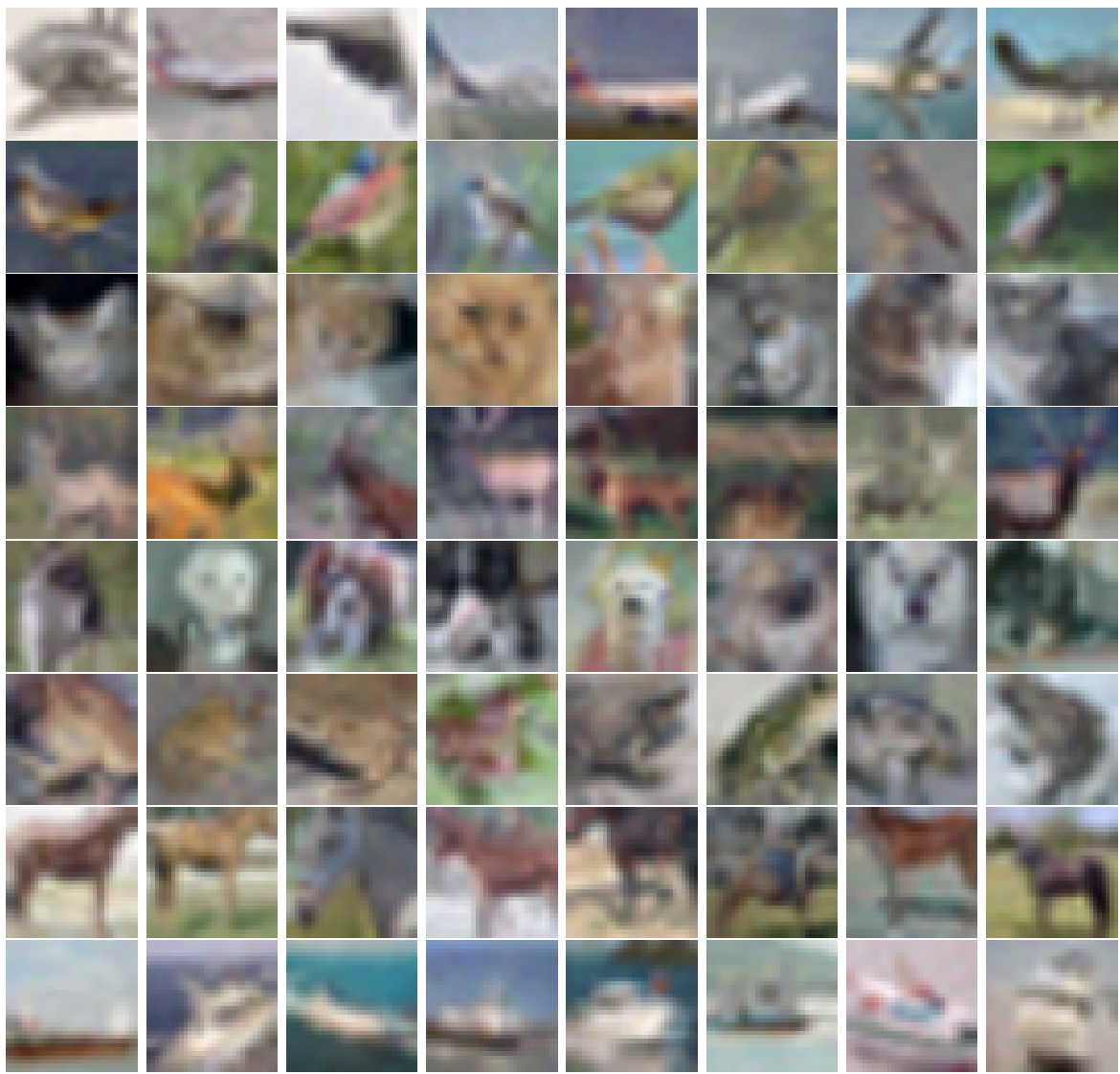

\centering
\includegraphics[width=0.11\linewidth]{figs/00_hofar.png}
\includegraphics[width=0.11\linewidth]{figs/01_hofar.png}
\includegraphics[width=0.11\linewidth]{figs/02_hofar.png}
\includegraphics[width=0.11\linewidth]{figs/03_hofar.png}
\includegraphics[width=0.11\linewidth]{figs/04_hofar.png}
\includegraphics[width=0.11\linewidth]{figs/05_hofar.png}
\includegraphics[width=0.11\linewidth]{figs/06_hofar.png}
\includegraphics[width=0.11\linewidth]{figs/07_hofar.png} \\
\includegraphics[width=0.11\linewidth]{figs/10_hofar.png}
\includegraphics[width=0.11\linewidth]{figs/11_hofar.png}
\includegraphics[width=0.11\linewidth]{figs/12_hofar.png}
\includegraphics[width=0.11\linewidth]{figs/13_hofar.png}
\includegraphics[width=0.11\linewidth]{figs/14_hofar.png}
\includegraphics[width=0.11\linewidth]{figs/15_hofar.png}
\includegraphics[width=0.11\linewidth]{figs/16_hofar.png}
\includegraphics[width=0.11\linewidth]{figs/17_hofar.png} \\
\includegraphics[width=0.11\linewidth]{figs/20_hofar.png}
\includegraphics[width=0.11\linewidth]{figs/21_hofar.png}
\includegraphics[width=0.11\linewidth]{figs/22_hofar.png}
\includegraphics[width=0.11\linewidth]{figs/23_hofar.png}
\includegraphics[width=0.11\linewidth]{figs/24_hofar.png}
\includegraphics[width=0.11\linewidth]{figs/25_hofar.png}
\includegraphics[width=0.11\linewidth]{figs/26_hofar.png}
\includegraphics[width=0.11\linewidth]{figs/27_hofar.png} \\
\includegraphics[width=0.11\linewidth]{figs/30_hofar.png}
\includegraphics[width=0.11\linewidth]{figs/31_hofar.png}
\includegraphics[width=0.11\linewidth]{figs/32_hofar.png}
\includegraphics[width=0.11\linewidth]{figs/33_hofar.png}
\includegraphics[width=0.11\linewidth]{figs/34_hofar.png}
\includegraphics[width=0.11\linewidth]{figs/35_hofar.png}
\includegraphics[width=0.11\linewidth]{figs/36_hofar.png}
\includegraphics[width=0.11\linewidth]{figs/37_hofar.png} \\
\includegraphics[width=0.11\linewidth]{figs/40_hofar.png}
\includegraphics[width=0.11\linewidth]{figs/41_hofar.png}
\includegraphics[width=0.11\linewidth]{figs/42_hofar.png}
\includegraphics[width=0.11\linewidth]{figs/43_hofar.png}
\includegraphics[width=0.11\linewidth]{figs/44_hofar.png}
\includegraphics[width=0.11\linewidth]{figs/45_hofar.png}
\includegraphics[width=0.11\linewidth]{figs/46_hofar.png}
\includegraphics[width=0.11\linewidth]{figs/47_hofar.png} \\
\includegraphics[width=0.11\linewidth]{figs/50_hofar.png}
\includegraphics[width=0.11\linewidth]{figs/51_hofar.png}
\includegraphics[width=0.11\linewidth]{figs/52_hofar.png}
\includegraphics[width=0.11\linewidth]{figs/53_hofar.png}
\includegraphics[width=0.11\linewidth]{figs/54_hofar.png}
\includegraphics[width=0.11\linewidth]{figs/55_hofar.png}
\includegraphics[width=0.11\linewidth]{figs/56_hofar.png}
\includegraphics[width=0.11\linewidth]{figs/57_hofar.png} \\
\includegraphics[width=0.11\linewidth]{figs/60_hofar.png}
\includegraphics[width=0.11\linewidth]{figs/61_hofar.png}
\includegraphics[width=0.11\linewidth]{figs/62_hofar.png}
\includegraphics[width=0.11\linewidth]{figs/63_hofar.png}
\includegraphics[width=0.11\linewidth]{figs/64_hofar.png}
\includegraphics[width=0.11\linewidth]{figs/65_hofar.png}
\includegraphics[width=0.11\linewidth]{figs/66_hofar.png}
\includegraphics[width=0.11\linewidth]{figs/67_hofar.png} \\
\includegraphics[width=0.11\linewidth]{figs/70_hofar.png}
\includegraphics[width=0.11\linewidth]{figs/71_hofar.png}
\includegraphics[width=0.11\linewidth]{figs/72_hofar.png}
\includegraphics[width=0.11\linewidth]{figs/73_hofar.png}
\includegraphics[width=0.11\linewidth]{figs/74_hofar.png}
\includegraphics[width=0.11\linewidth]{figs/75_hofar.png}
\includegraphics[width=0.11\linewidth]{figs/76_hofar.png}
\includegraphics[width=0.11\linewidth]{figs/77_hofar.png}
\caption{64 32*32 images generated by HOFAR. }
\label{fig:app:visual_hofar}
\end{figure}

%%%% Cut-line between first 10 pages and appendix

%%% some writing rules

%% Writing rule for creating tags.
%% Tags :
%% Theorem    \ref{thm:bla_bla}
%% Lemma      \ref{lem:bla_bla}
%% Claim      \ref{cla:bla_bla}
%% Corollary  \ref{cor:bla_bla}
%% Fact       \ref{fac:bla_bla}
%% Definition \ref{def:bla_bla}
%% Section    \ref{sec:bla_bla}
%% Subsection \ref{sub:bla_bla}
%% Equation   \ref{eq:bla_bla}

\end{document}